\DeclareMathOperator*{\argmax}{\arg\max}
\newcommand{\Var}{\operatorname{Var}}
\newtheorem{assumption}{Assumption}
\newcounter{ClemensCounter}
\newcounter{SteffenCounter}
\newcounter{FelixCounter}
\newcounter{PhilippCounter}
\begin{document}
\title{Safe Policy Improvement Approaches and their Limitations}
%
%
\author{Philipp Scholl\inst{1}  \and
Felix Dietrich\inst{2} \and
Clemens Otte \inst{3} \and
Steffen Udluft\inst{3}}
\authorrunning{P. Scholl et al.}
%
\institute{Ludwig-Maximilians University, Munich, Germany
\email{scholl@math.lmu.de}\and
Technical University of Munich, Munich, Germany
\email{felix.dietrich@tum.de}\\\and
Siemens Technology, Munich, Germany\\
\email{\{clemens.otte,steffen.udluft\}@siemens.com}}
\maketitle              
\begin{abstract}

Safe Policy Improvement (SPI) is an important technique for offline reinforcement learning in safety critical applications as it improves the behavior policy with a high probability. We classify various SPI approaches from the literature into two groups, based on how they utilize the uncertainty of state-action pairs. Focusing on the Soft-SPIBB (Safe Policy Improvement with Soft Baseline Bootstrapping) algorithms, we show that their claim of being provably safe does not hold.  Based on this finding, we develop adaptations, the Adv-Soft-SPIBB algorithms, and show that they are provably safe. A heuristic adaptation, Lower-Approx-Soft-SPIBB, yields the best performance among all SPIBB algorithms in extensive experiments on two benchmarks. We also check the safety guarantees of the provably safe algorithms and show that huge amounts of data are necessary such that the safety bounds become useful in practice.

\keywords{Risk-Sensitive Reinforcement Learning \and Safe Policy Improvement \and Markov Decision Processes.}
\end{abstract}
\section{Introduction}  \label{sec:introduction}

Reinforcement learning (RL) in industrial control applications such as gas turbine control \cite{schaefer_neural_2007} often requires learning solely from pre-recorded observational data (called offline or batch RL \cite{lange_batch_2012,pmlr-v97-fujimoto19a,DBLP:journals/corr/abs-2005-01643}) to avoid a potentially unsafe online exploration. This is especially necessary when simulations of the system are not available. Since it is difficult to assess the actual quality of the learned policy in this situation \cite{5967358,wang2021what}, Safe Policy Improvement (SPI) \cite{thomas_safe_nodate,nadjahi_safe_2019} is an attractive option. SPI aims to ensure that the learned policy is, with high probability, at least approximately as good as the behavioral policy given, for example, by a conventional controller.

In this paper, we review and benchmark current SPI algorithms and divide them into two classes. Among them, we focus on the class of Soft-SPIBB algorithms \cite{nadjahi_safe_2019} and show that they are, contrary to the authors' claim, not provably safe. Therefore, we introduce the adaptation Adv-Approx-Soft-SPIBB. We also develop the heuristic Lower-Approx-Soft-SPIBB, following an idea presented in Laroche et al.~\cite{laroche_safe_2019}. Additionally, we conduct experiments to test these refined algorithms against their predecessors and other promising SPI algorithms. Here, our taxonomy of the SPI algorithms is helpful, as the two classes of algorithms show different behavior. 
In summary, we extend observations from Scholl et al. \cite{scholl_icaart22} in several ways:
\begin{itemize}
    \item We include a detailed presentation of the competitor SPI algorithms to equip the reader with an in-depth understanding of various SPI mechanisms and our taxonomy.
    \item We present the complete proof of the safety of Adv-Soft-SPIBB with further adaptations from the one from Nadjahi et al. \cite{nadjahi_safe_2019} and discuss the usage of a tighter error bound (Maurer and Pontil \cite{mpeb}) to strengthen the safety theorem.
    \item We test empirically the error bounds of all provably safe algorithms to check their applicability and identify some limitations, as some algorithms need huge amounts of data to produce good policies while maintaining a meaningful error bound at the same time. 
\end{itemize}

The code for the algorithms and experiments can be found in the accompanying repository.\footnote{\url{https://github.com/Philipp238/Safe-Policy-Improvement-Approaches-on-Discrete-Markov-Decision-Processes}}
The next section introduces the mathematical framework necessary for the later sections. Section \ref{sec:related-work} introduces other SPI algorithms, divides them into two classes and explains the most promising ones in more detail. In Section \ref{sec:soft-spibb}, we present the work done by Nadjahi et al. \cite{nadjahi_safe_2019} and show that their algorithms are not provably safe. In Section \ref{sec:advantageous-soft-spibb}, we refine the algorithms of Nadjahi et al. \cite{nadjahi_safe_2019} and prove the safety of one of them. Tests against various competitors on two benchmarks are described in Section \ref{sec:experiments}. The test for the usefulness of the safety guarantees is discussed in Section \ref{sec:limitations-of-the-theory-in-practice}.

\section{Mathematical Framework}

The control problem we want to tackle with reinforcement learning consists of an agent and an environment, modeled as a finite Markov Decision Process (MDP). A finite MDP $M^*$ is represented by the tuple $M^*=(\mathcal{S}, \mathcal{A}, P^*, R^*, \gamma)$, where $\mathcal{S}$ is the finite state space, $\mathcal{A}$ the finite action space, $P^*$ the unknown transition probabilities, $R^*$ the unknown stochastic reward function, the absolute value of which is assumed to be bounded by $R_{max}$, and $0\leq\gamma<1$ is the discount factor.

The agent chooses action $a\in\mathcal{A}$ with probability $\pi(a|s)$ in state $s\in\mathcal{S}$, where $\pi$ is the policy controlling the agent. The return at time $t$ is defined as the discounted sum of rewards $G_t=\sum_{i=t}^{T}\gamma ^{i-t} R^*(s_i,a_i)$, with $T$ the time of termination of the MDP. As the reward function is bounded the return is bounded as well, since $|G_t|\leq\frac{R_{max}}{1-\gamma}$. So, let $G_{max}$ be a bound on the absolute value of the return. The goal is to find a policy $\pi$ which optimizes the expected return, i.e., the state-value function $V_{M^*}^{\pi}(s)=E_\pi[G_t|S_t=s]$ for the initial state $s\in\mathcal{S}$. Similarly, the action-value function is defined as $Q_{M^*}^{\pi}(s,a)=E_\pi[G_t|S_t=s,A_t=a]$.

Given data $\mathcal{D}=(s_j,a_j,r_j,s_j')_{j=1,\dots,n}$ collected by the baseline policy $\pi_b$, let $N_\mathcal{D}(s,a)$ denote the number of visits of the state-action pair $(s,a)$ in $\mathcal{D}$ and $\hat{M}=(\mathcal{S},\mathcal{A},\hat{P},\hat{R},\gamma)$ the Maximum Likelihood Estimator (MLE) of $M^*$ where
\begin{equation}
	\hat{P}(s'|s,a)=\frac{\sum_{(s_j=s,a_j=a,r_j,s_j'=s')\in\mathcal{D}}1}{N_\mathcal{D}(s,a)} \text{ and }   \hat{R}(s,a)=\frac{\sum_{(s_j=s,a_j=a,r_j,s_j')\in\mathcal{D}}r_j}{N_\mathcal{D}(s,a)}.
\end{equation}

\section{Related Work} \label{sec:related-work}

Safety is an overloaded term in Reinforcement Learning, because it can refer to the inherent uncertainty, safe exploration techniques, or parameter uncertainty
\cite{garcia_comprehensive_nodate}. In this paper we focus on the latter. In the following subsection we will introduce a taxonomy of SPI algorithms defined by us, and then go into detail for the most promising methods in the other subsections.

\subsection{Taxonomy} \label{sec:taxonomy}

Many existing Safe Policy Improvement (SPI) algorithms utilize the uncertainty of state-action pairs in one of the two following ways (see also Figure \ref{fig:taxonomy}):
\begin{itemize}
    \item[1.] The uncertainty is applied to the action-value function to decrease the value of uncertain actions. Therefore, these algorithms usually adapt the Policy Evaluation (PE) step.
    \item[2.] The uncertainty is used to restrict the set of policies that can be learned. Therefore, these algorithms usually adapt the Policy Iteration (PI) step.
\end{itemize}

\begin{figure*}[h!]
  \centering
  \includegraphics[width = 1\linewidth]{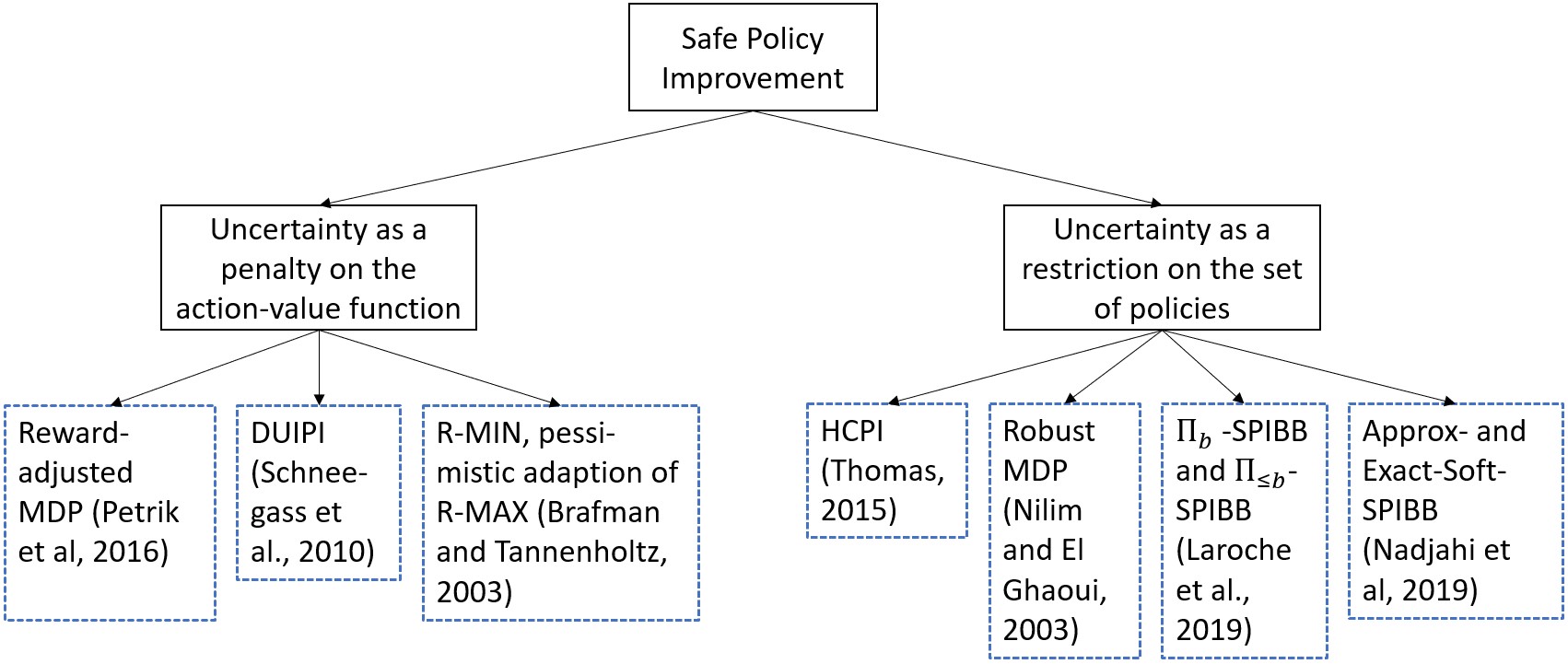}
  \caption{Taxonomy of SPI algorithms~\cite{scholl_icaart22}.}
  \label{fig:taxonomy}
 \end{figure*}

In the remainder of this section we introduce all algorithms shown in Figure \ref{fig:taxonomy}, except for High Confidence Policy Improvement (HCPI) \cite{thomas_safe_nodate} and Robust MDP \cite{robust_mdps}, as they have been shown in Laroche et al. \cite{laroche_safe_2019} and Nadjahi et al. \cite{nadjahi_safe_2019} to be not competitive in practice against many of the other SPI algorithms. We also do not introduce the Soft-SPIBB algorithms, as they are investigated in more detail in Section \ref{sec:soft-spibb}.

\subsection{RaMDP} \label{sec:ramdp}
Petrik et al. \cite{petrik} showed that maximizing the difference between the new policy and a baseline policy on a rectangular uncertainty set of the transition probabilities is NP-hard and, thus, derived the approximation Reward-adjusted MDP (RaMDP). RaMDP applies the uncertainty to penalize the reward and, therefore, the action-value function. Laroche et al. \cite{laroche_safe_2019} extended this algorithm by a hyper-parameter which controls the influence of the uncertainty and modifies the reward matrix to incorporate the uncertainty in the following way:
\begin{equation} \label{eq:mbie-eb}
	\tilde{R}(s,a) = \hat{R}(s,a) -\frac{\kappa}{\sqrt{N_\mathcal{D}(s,a)}},
\end{equation}
where $\kappa>0$ is some hyper-parameter.

\subsection{DUIPI} \label{sec:duipi}

While RaMDP computes the uncertainty simply as a function of the number of visits to a state-action pair, Schneegass et al. \cite{jabin_uncertainty_2010} try a more sophisticated approach to estimate the uncertainty for Diagonal Approximation of Uncertainty Incorporating Policy Iteration (DUIPI) by approximating the variance of the action-value function.

Contrary to most other papers which reduce the reward function to
$R:\mathcal{S}\times\mathcal{A}\rightarrow\mathbb{R}$, they consider
\begin{equation}
	\hat{R}_3:\mathcal{S}\times\mathcal{A}\times\mathcal{S}\rightarrow\mathbb{R},
	\quad
	\hat{R}_3(s,a,s')=\frac{\sum_{(s_j=s,a_j=a,r_j,s_j'=s')\in\mathcal{D}}r_j}{N_\mathcal{D}(s,a,s')}.
\end{equation}
The calculation of the action-value function for some policy $\pi$ in the MDP
with the estimates $\hat{P}$ and $\hat{R}_3$ is classically done by Iterative Policy Evaluation, i.e., by inserting the estimate of the action-value function iteratively into the action Bellman equation \cite{sutton_reinforcement_nodate}:
	\begin{equation} \label{eq:iterative-policy-evaluation}
		Q_{k+1}(s,a)=\sum_{s'}\hat{P}(s'|s,a)(\hat{R}_3(s,a,s')+\gamma \sum_{a'}\pi(a'|s')Q_k(s',a')).
	\end{equation}
The variance of $Q$ is then also
calculated iteratively by using uncertainty propagation, i.e., the fact that 
\begin{equation} \label{eq:uncertainty-propagation}
	\Var(f(X))\approx D\Var(X)D^T
\end{equation}
holds for
a function $f:\mathbb{R}^m\rightarrow\mathbb{R}^n$ with Jacobian
$D_{i,j}=\frac{\partial f_i}{\partial x_j}$.
Schneegass et al. \cite{jabin_uncertainty_2010} present two different approaches in their paper, the first makes use of the
complete covariance matrix, while the second neglects the correlations
between state-action pairs. The authors show that the second
approach is significantly faster and that the correlation is less important for
MDPs with bigger state and action spaces. In the benchmark where it made a
difference, there were 6 states with 2 actions and in the one where it
did not matter anymore there were 18 states and 4 actions. As all
benchmarks considered in this paper use MDPs which are greater than those two,
we concentrate on the second approach called DUIPI (Diagonal
Approximation of Uncertainty Incorporating Policy Iteration). By neglecting the
covariances, Equation \ref{eq:uncertainty-propagation} becomes
\begin{equation} \label{eq:uncertainty-propagation-diagonal}
	\Var(f(X)) \approx \sum_i \left(\frac{\partial f}{\partial X_i}\right)^2\Var(X_i),
\end{equation}
which should be read component-wise. Applying Equation
\ref{eq:uncertainty-propagation-diagonal} to the action-value update in
Equation \ref{eq:iterative-policy-evaluation} yields
\begin{equation} \label{eq:DUIPI-update-cov}
	\begin{gathered}
		\Var(Q^m(s,a)) \approx \sum_{s'}
		(D_{Q,Q}^m)^2\left(\sum_{a'}\pi(a'|s')^2\Var(Q^{m-1}(s',a'))\right)\\
		+ \sum_{s'} D_{Q,\hat{P}}^m \Var(\hat{P}(s'|s,a))
		+ \sum_{s'} D_{Q,\hat{R}_3}^m \Var(\hat{R}_3(s,a,s')))
	\end{gathered}
\end{equation}
with the derivatives
\begin{equation}
	\begin{gathered}
		D_{Q,Q}^m=\gamma\hat{P}(s'|s,a),\:
		D_{Q,\hat{P}}^m=\hat{R}_3(s,a,s')+\gamma \sum_{a'}\pi(a'|s')^2Q^{m-1}(s',a') \\
		\text{and }
		D_{Q,\hat{R}_3}^m = \hat{P}(s'|s,a).
	\end{gathered}
\end{equation}
Equation \ref{eq:DUIPI-update-cov} shows that the variance of the action-value
function at the current step depends on the variance of the action-value
function at the previous step and on the variance of the estimators $\hat{P}$
and $\hat{R}_3$.

To make use of the variance of the action-value function, which is simultaneously computed in the PE step, the authors propose to
define
\begin{equation} \label{eq:DUIPI-uncertainty-q-function}
	Q_u(s,a) = Q^\pi_{\hat{M}}(s,a) - \xi \sqrt{\Var(Q^\pi_{\hat{M}}(s,a))}
\end{equation}
and use the uncertainty incorporating action-value function $Q_u$ instead of
$Q^\pi_{\hat{M}}$ in the PI step. The policy is then chosen in a non greedy way, for details see Schneegass et al. \cite{jabin_uncertainty_2010}. By choosing $\xi>0$, the uncertainty of
one state-action pair has a negative influence on $Q_u$ and the new policy prefers state-action pairs with a low variance.

Assuming that the variance is well approximated, even though uncertainty propagation is used and covariances between the state-action pairs are neglected, and a normal distribution as the prior for the action-value function is close to reality,
yields a confidence interval on the real action-value function $Q$, as
\begin{equation} \label{eq:duipi-safety-bound}
	\mathbb{P}(Q(s,a) > Q_u(s,a)) = 1 - F(\xi)
\end{equation}
holds, with $Q_u$ as defined in Equation \ref{eq:DUIPI-uncertainty-q-function}
and $F$ as the CDF of a standard normal distribution \cite{jabin_uncertainty_2010}.

One question that remains is how to determine the covariances of the estimators
$\hat{P}$ and $\hat{R}_3$. The authors present a Bayesian approach applying the Dirichlet distribution as a prior for this,
which will be used in this paper.

In the experiments in Section \ref{sec:experiments} we observed that DUIPI's
estimation of the transition probabilities results in the possibility that
unobserved state-action pairs are chosen, which is highly dangerous for safe reinforcement learning. For this reason we implemented a
mechanism which checks if a state-action pair has been visited at least once in the training data and if it has not been visited, the probability of the new policy to choose this state-action pair will be set to 0 if possible.
In our experiments this adjustment turned out to yield a strong performance improvement and, thus, we use only this new implementation of DUIPI for the experiments in the Sections \ref{sec:experiments} and \ref{sec:limitations-of-the-theory-in-practice}.

\subsection{R-MIN} \label{sec:r-min}
	
Brafman and Tannenholtz \cite{Brafman2002RMAXA} introduce the R-MAX algorithm as a simple
online algorithm for exploration. Its basic idea is to set the value of a state-action pair to the highest possible value, if it has not been
visited more than $N_\wedge$ times yet, to encourage the algorithm to try these state-action pairs for an efficient
exploration.

The problem considered in this paper is quite opposite to that. First
of all, we are confronted with a batch offline learning problem. Furthermore, we are not interested in exploration but in safety. Consequently, we invert their
technique and instead of setting the value of uncertain state-action pairs to
the best possible value, we set it to the worst possible value, which is why we
renamed the algorithm to R-MIN.
The action-value function, thus, computes as
\begin{equation}\label{eq:r-min-PE}
	Q_{k+1}(s,a)=\begin{dcases}
		\sum_{s',r}p(s',r|s,a)(r+\gamma \sum_{a'}\pi(a'|s')Q_k(s',a')),
	    & \text{if } N_\mathcal{D}(s,a)>N_\wedge\\
    \frac{R_{min}}{1-\gamma}, & \text{if } N_\mathcal{D}(s,a)\leq
	N_\wedge,
	\end{dcases}
\end{equation}
as $\frac{R_{min}}{1-\gamma}$ is the sharpest lower bound on the value function which works for any
MDP, without any additional knowledge about its structure. Note that it is
important to apply Equation \ref{eq:r-min-PE} in every iteration of the PE step
and not only at the end, as the uncertainty of one state-action pair should also
influence the value of state-action pairs leading to the uncertain one with a
high probability.

\subsection{SPIBB} \label{sec:spibb}

Regarding algorithms restricting the policy set, Laroche et al. \cite{laroche_safe_2019} only allow deviations from the baseline policy at a state-action pair if the uncertainty is low enough, otherwise it remains the same. To decide whether the uncertainty of a state-action pair is high, they only regard the times a state-action pair has been visited and use the bootstrapped set $\mathcal{B}$ as the set of uncertain state-action pairs, which has been visited less than some hyperparameter $N_\wedge$:
\begin{equation} \label{eq:def-bootstrapped-set}
	\mathcal{B} =
	\{(s,a)\in\mathcal{S}\times\mathcal{A}:N_\mathcal{D}(s,a)\leq N_\wedge\}.
\end{equation}
Laroche et al. \cite{laroche_safe_2019} propose two algorithms: $\Pi_b$-SPIBB, which is provably safe, and $\Pi_{\leq b}$-SPIBB, which is a heuristic relaxation. $\Pi_b$-SPIBB approximately solves the following optimization problem in its Policy Improvement step:
\begin{subequations}
\begin{equation}
    \pi'=\argmax_\pi\sum_{a\in\mathcal{A}}Q^\pi_{\hat{M}}(s,a)\pi(a|s) \text{, subject to:}
\end{equation}
\begin{equation}
    \pi'(\cdot|s) \text{ being a probability density over }\mathcal{A} \text{ and }
\end{equation}
\begin{equation}
    \pi'(a|s)=\pi_b(a|s), \;\forall (s,a)\in\mathcal{B}.
\end{equation}
\end{subequations}

The fact that a policy computed by $\Pi_b$-SPIBB equals the behavior policy $\pi_b$ for
any state-action pair in the bootstrapped set, enables the authors to prove that
$\Pi_b$-SPIBB produces policies $\pi$ for which
\begin{equation}
	V^\pi_{M^*}(s)-V^{\pi_b}_{M^*}(s)\geq-\frac{4V_{max}}{1-\gamma}
	\sqrt{\frac{2}{N_\wedge}\log\frac{2|\mathcal{S}||\mathcal{A}|2^{|\mathcal{S}|}}{\delta}}
\end{equation}
holds with probability $1-\delta$ for any state $s\in\mathcal{S}$.

Its heuristic variation $\Pi_{\leq b}$-SPIBB does not exhibit any theoretical
safety. However, it performed even better than $\Pi_{b}$-SPIBB in their
benchmarks. Its most concise description is probably via its
optimization problem,
\begin{subequations}
\begin{equation}
    \pi'=\argmax_\pi\sum_{a\in\mathcal{A}}Q^\pi_{\hat{M}}(s,a)\pi(a|s) \text{, subject to:}
\end{equation}
\begin{equation}
    \pi'(\cdot|s) \text{ being a probability density over }\mathcal{A} \text{ and }
\end{equation}
\begin{equation}
    \pi'(a|s)\leq\pi_b(a|s), \;\forall (s,a)\in\mathcal{B}.
\end{equation}
\end{subequations}
This problem only differs in its second constraint to the optimization problem corresponding to $\Pi_b$-SPIBB. While $\Pi_b$-SPIBB
prohibits the change of the behavior policy for uncertain state-action pairs,
$\Pi_{\leq b}$-SPIBB just does not allow to add more weight to them but allows
to reduce it.

\section{Safe Policy Improvement with Soft Baseline Bootstrapping} \label{sec:soft-spibb}
Nadjahi et al. \cite{nadjahi_safe_2019} continue the line of work on the SPIBB algorithms and relax the hard bootstrapping to a softer version, where the baseline policy can be changed at any state-action pair, but the amount of possible change is limited by the uncertainty at this state-action pair. They claim that these new algorithms, called Safe Policy Improvement with Soft Baseline Bootstrapping (Soft-SPIBB), are also provably safe, a claim that is repeated in Simao et al. \cite{Simo2020SafePI} and Leurent \cite{leurent:tel-03035705}. Furthermore, they extend the experiments from Laroche et al. \cite{laroche_safe_2019} to include the Soft-SPIBB algorithms, where the empirical advantage of these algorithms becomes clear. However, after introducing the algorithms and the theory backing up their safety, we will show in Section \ref{sec:soft-spibb-shortcomings-of-the-theory} that they are in fact not provably safe, which motivates our adaptations in Section \ref{sec:advantageous-soft-spibb}.

\subsection{Preliminaries} \label{sec:soft-spibb-preliminaries}

To bound the performance of the new policy, it is necessary to bound the estimate of the transition probabilities $\hat{P}$. By applying Hoelder's inequality, Nadjahi et al. \cite{nadjahi_safe_2019} show that
\begin{equation} \label{eq:bound-P}
	||P(\cdot|s,a)-\hat{P}(\cdot|s,a)||_1\leq e_P(s,a),
\end{equation}
holds with probability $1-\delta$, where 
\begin{equation} \label{eq:error-function-P}		
	e_P(s,a)=\sqrt{\frac{2}{N_\mathcal{D}(s,a)}\log\frac{2|\mathcal{S}||\mathcal{A}|2^{|\mathcal{A}|}}{\delta}}.
\end{equation}
The error function is used to quantify the uncertainty of each state-action pair.
\begin{definition} \label{def:constrained}
	A policy $\pi$ is $(\pi_b,\epsilon,e)$-\emph{constrained} w.r.t.\ a baseline policy
	$\pi_b$, an error function $e$ and a hyper-parameter $\epsilon>0$, if 
	\begin{equation} \label{eq:constrained}
		\sum_{a\in\mathcal{A}}e(s,a)|\pi(a|s)-\pi_b(a|s)|\leq\epsilon
	\end{equation}
	holds for all states $s\in\mathcal{S}$.
\end{definition}
Therefore, if a policy $\pi$ is $(\pi_b,\epsilon,e)$-constrained, it means that the $l^1$-distance between $\pi$ and $\pi'$, weighted by some error function $e$, is at most $\epsilon$. 

\subsection{Algorithms} \label{sec:soft-spibb-algorithms}

The new class of algorithms Nadjahi et al. \cite{nadjahi_safe_2019} introduce aims at solving the following constrained optimization problem during the Policy Improvement step:
\begin{subequations} \label{eq:optimization-problem-PI}
\begin{equation}
    \pi'=\argmax_\pi\sum_{a\in\mathcal{A}}Q^\pi_{\hat{M}}(s,a)\pi(a|s) \text{, subject to:}
\end{equation}
\begin{equation}
    \pi'(\cdot|s) \text{ being a probability density over }\mathcal{A} \text{ and }
\end{equation}
\begin{equation}
    \pi^{(i+1)} \text{ being } (\pi_b,\epsilon, e)\text{-constrained}.
\end{equation}
\end{subequations}
This computation leads to the optimal---w.r.t. the action-value function of the previous policy---$(\pi_b,\epsilon, e)$-constrained policy. The two algorithms introduced in Nadjahi et al. \cite{nadjahi_safe_2019}, which numerically solve this optimization problem, are Exact-Soft-SPIBB and Approx-Soft-SPIBB. The former solves the linear formulation of the constrained problem by a linear program \cite{linear-program} and the latter uses a budget calculation for the second constraint to compute an approximate solution. In experiments, it is shown that both algorithms achieve similar performance, but Exact-Soft-SPIBB takes considerably more time \cite{nadjahi_safe_2019}.

\subsection{The Safety Guarantee} \label{sec:soft-spibb-safety-guarantee}

Nadjahi et al. \cite{nadjahi_safe_2019} derive the theoretical safety of their algorithms from two theorems. The first theorem, however, needs an additional property their algorithms do not fulfill. For that reason we will move its discussion to Section \ref{sec:advantageous-soft-spibb}, where we will also introduce refined algorithms which fulfill said property. 
To make up for this property, Nadjahi et al. \cite{nadjahi_safe_2019} use Assumption \ref{ass:assumption-1} to prove Theorem \ref{th:theorem-2}.
\begin{assumption} \label{ass:assumption-1}
    There exists a constant $\kappa<\frac{1}{\gamma}$ such
	that, for all state-action pairs $(s,a)\in\mathcal{S}\times\mathcal{A}$, the
	following holds:
    \begin{equation} \label{eq:assumption-1}
	    \sum_{s',a'}e_P(s',a')\pi_b(a'|s')P^*(s'|s,a)\leq\kappa e_P(s,a)
    \end{equation}
\end{assumption}
Interpreting $\pi_b(a'|s')P^*(s'|s,a)$ as the probability of observing the state-action pair $(s',a')$ after observing $(s,a)$ we can rewrite Equation \ref{eq:assumption-1} to 
\begin{equation} \label{eq:assumption-1-reformulated}
	E_{P,\pi_b}[e_P(S_{t+1},A_{t+1})|S_t=s,A_t=a]\leq\kappa e_P(s,a),
\end{equation}
which shows that Assumption \ref{ass:assumption-1} assumes an upper bound on the uncertainty of the next state-action pair dependent on the uncertainty of the current one. Intuitively this makes sense, but we show in the next section that the bound does not hold in general. However, using this assumption Nadjahi et al. \cite{nadjahi_safe_2019} prove Theorem \ref{th:theorem-2} which omits the advantageous assumption of the new policy.
\begin{theorem} \label{th:theorem-2}
    Under Assumption 1, any
		$(\pi_b,\epsilon,e_P)$-constrained policy $\pi$ satisfies the following
		inequality in every state $s$ with probability at least $1-\delta$:\\
	\begin{gather}
		V^\pi_{M^*}(s)-V^{\pi_b}_{M^*}(s)\geq V^\pi_{\hat{M}}(s)-V^{\pi_b}_{\hat{M}}(s) +\nonumber\\
		2||d^\pi_M(\cdot|s)-d^{\pi_b}_M(\cdot|s)||_1V_{max}
		-\frac{1+\gamma}{(1-\gamma)^2(1-\kappa\gamma)}\epsilon V_{max}
	\end{gather}
\end{theorem}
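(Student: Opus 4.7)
The plan is to reduce the claim to a bound on a model-error residual and then iterate Assumption~\ref{ass:assumption-1} through a Bellman-style recursion. First I would write the trivial decomposition
\begin{equation*}
V^\pi_{M^*}(s) - V^{\pi_b}_{M^*}(s) = \left(V^\pi_{\hat{M}}(s) - V^{\pi_b}_{\hat{M}}(s)\right) + \Delta(s),
\end{equation*}
with the residual
\begin{equation*}
\Delta(s) := \left(V^\pi_{M^*}(s) - V^\pi_{\hat{M}}(s)\right) - \left(V^{\pi_b}_{M^*}(s) - V^{\pi_b}_{\hat{M}}(s)\right),
\end{equation*}
so the theorem reduces to showing $\Delta(s) \geq 2\|d^\pi_M(\cdot|s) - d^{\pi_b}_M(\cdot|s)\|_1 V_{max} - \tfrac{(1+\gamma)}{(1-\gamma)^2(1-\kappa\gamma)}\epsilon V_{max}$.

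For each of the two differences inside $\Delta$, I would invoke the Bellman equation in $M^*$ and $\hat{M}$. For any fixed policy $\mu$ this gives
\begin{equation*}
V^\mu_{M^*}(s) - V^\mu_{\hat{M}}(s) = \gamma \sum_{a} \mu(a|s)\!\left[\sum_{s'}(P^*-\hat{P})(s'|s,a) V^\mu_{\hat{M}}(s') + \sum_{s'} P^*(s'|s,a)\bigl(V^\mu_{M^*}(s')-V^\mu_{\hat{M}}(s')\bigr)\right].
\end{equation*}
Writing this once for $\mu=\pi$ and once for $\mu=\pi_b$, subtracting, and reorganising the result would split $\Delta(s)$ into three pieces: (i) a ``policy-deviation'' term $\gamma\sum_a(\pi(a|s)-\pi_b(a|s))\sum_{s'}(P^*-\hat{P})(s'|s,a)V^\pi_{\hat{M}}(s')$; (ii) a ``value-gap'' term $\gamma\sum_a \pi_b(a|s)\sum_{s'}(P^*-\hat{P})(s'|s,a)\bigl(V^\pi_{\hat{M}}(s')-V^{\pi_b}_{\hat{M}}(s')\bigr)$; and (iii) a genuine $\pi_b$-recursion $\gamma\sum_a \pi_b(a|s)\sum_{s'}P^*(s'|s,a)\Delta(s')$. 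This is the structure on which Assumption~\ref{ass:assumption-1} is tailored to act.

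Each one-step term is then controlled using Hölder's inequality together with Equation~\ref{eq:bound-P}, giving $\big|\sum_{s'}(P^*-\hat{P})(s'|s,a)V^\mu_{\hat{M}}(s')\big|\leq e_P(s,a)V_{max}$. Term (i) is then bounded by $\epsilon V_{max}$ directly from the $(\pi_b,\epsilon,e_P)$-constrained hypothesis; term (ii) becomes an $e_P$-weighted expectation under $\pi_b$ multiplied by the sup-norm $\|V^\pi_{\hat{M}}-V^{\pi_b}_{\hat{M}}\|_\infty\leq V_{max}$; and term (iii) is the desired recursion in $\Delta$. Unrolling (iii) through its infinitely many compositions produces a discounted sum along $\pi_b$-trajectories in $M^*$ of the local $e_P$ errors, and Assumption~\ref{ass:assumption-1} in the form of Equation~\ref{eq:assumption-1-reformulated} tells us that this expectation contracts by a factor $\kappa$ at each step, so the geometric series sums to $\tfrac{1}{1-\kappa\gamma}$. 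Combined with the $\tfrac{1}{1-\gamma}$ coming from the value-horizon bound and the $(1+\gamma)$ factor from collecting the two one-step contributions, the announced coefficient $\tfrac{1+\gamma}{(1-\gamma)^2(1-\kappa\gamma)}$ emerges. The additive $2\|d^\pi_M(\cdot|s)-d^{\pi_b}_M(\cdot|s)\|_1 V_{max}$ term I would obtain by applying the performance-difference lemma in $\hat{M}$ to rewrite the $V^\pi_{\hat{M}}-V^{\pi_b}_{\hat{M}}$ factor in term (ii) as an occupancy-weighted quantity, keeping its favourable sign rather than discarding it.

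The main obstacle is the bookkeeping in the second step: because Assumption~\ref{ass:assumption-1} is stated for trajectories drawn under $\pi_b$, the decomposition must be arranged so that the deviation $\pi-\pi_b$ is ``paid for'' exactly once, while all further propagation happens under $\pi_b$ so that the assumption can be iterated cleanly. A secondary technical point is that Equation~\ref{eq:bound-P} is a high-probability statement whose $1-\delta$ level depends on a union bound over $\mathcal{S}\times\mathcal{A}$ (already baked into $e_P$); I would need to verify that the single event on which Equation~\ref{eq:bound-P} holds at every $(s,a)$ suffices to validate all the recursive applications above, so that no extra probability budget is lost along the unrolled Bellman expansion.
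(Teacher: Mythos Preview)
The paper does not provide its own proof of Theorem~\ref{th:theorem-2}. The statement is quoted from Nadjahi et al.~\cite{nadjahi_safe_2019} and presented without proof; the paper's actual contribution in Section~\ref{sec:soft-spibb-shortcomings-of-the-theory} is to undermine the theorem by establishing (Theorem~\ref{th:assumption-1}) that Assumption~\ref{ass:assumption-1}, on which Theorem~\ref{th:theorem-2} is conditioned, fails in general. There is therefore nothing in this paper against which to compare your attempt.

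Your sketch---a Bellman-style recursion for the model-error residual, with Assumption~\ref{ass:assumption-1} used to contract the $e_P$-weighted expectation along $\pi_b$-trajectories and sum the resulting geometric series---is a plausible reconstruction of how such a bound would be obtained, but any verification would have to be made against the original Nadjahi et al.\ source rather than this paper.
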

Here, $d^\pi_M(s'|s)=\sum_{t=0}^\infty\gamma^t\mathbb{P}(S_t=s'|S_t\sim P\pi S_{t-1},S_0=s)$ denotes the expected discounted sum of visits to $s'$ when starting in $s$.

\subsection{Shortcomings of the Theory} \label{sec:soft-spibb-shortcomings-of-the-theory}

As explained above, the theoretical guarantees, Nadjahi et al. \cite{nadjahi_safe_2019} claim for the Soft-SPIBB algorithms, stem from Theorem \ref{th:theorem-2}. However, we now show that Assumption \ref{ass:assumption-1} does not hold for any $0<\gamma<1$.
\begin{theorem}	\label{th:assumption-1}
	Let the discount factor $0<\gamma<1$ be arbitrary. Then there exists an MDP $M$
	with transition probabilities $P$ such that for any behavior policy $\pi_b$ and
	any data set $\mathcal{D}$, which contains every state-action pair at least once, it holds that, for all $0<\delta<1$,
	\begin{equation} \label{eq:assumption-1-theorem}
		\sum_{s',a'}e_P(s',a')\pi_b(a'|s')P(s'|s,a) > \frac{1}{\gamma} e_P(s,a).
	\end{equation}
	This means that Assumption 1 can, independent of the discount factor, not be
	true for all MDPs.
\end{theorem}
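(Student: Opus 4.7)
The plan is to exhibit an explicit MDP together with a dataset at which the strict inequality of (\ref{eq:assumption-1-theorem}) holds at some state-action pair, ruling out the existence of any $\kappa < 1/\gamma$ satisfying Assumption~\ref{ass:assumption-1}. The key structural observation is that $e_P(s,a) = c_\delta/\sqrt{N_\mathcal{D}(s,a)}$ for a common factor $c_\delta = \sqrt{2\log(2|\mathcal{S}||\mathcal{A}|2^{|\mathcal{A}|}/\delta)}$ that cancels from both sides of (\ref{eq:assumption-1-theorem}); the violation therefore reduces to a purely combinatorial condition on visit counts and holds uniformly in $\delta$.

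For the construction, I would take $M$ to be the two-state, single-action cyclic MDP on $\{s_1, s_2\}$ with deterministic transitions $P(s_2 \mid s_1, a) = P(s_1 \mid s_2, a) = 1$. Since the action set is a singleton, any baseline policy $\pi_b$ is forced to place all probability mass on $a$, which handles the ``for any $\pi_b$'' clause at once. I would then choose a dataset with $N_\mathcal{D}(s_1, a) = N$ and $N_\mathcal{D}(s_2, a) = 1$ for some integer $N > 1/\gamma^2$, satisfying the hypothesis that every state-action pair is visited at least once.

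Evaluating (\ref{eq:assumption-1-theorem}) at $(s,a) = (s_1, a)$, the deterministic cycle collapses the left-hand side to $e_P(s_2, a) = c_\delta$, while the right-hand side equals $e_P(s_1, a)/\gamma = c_\delta/(\gamma\sqrt{N})$. The claim becomes $\gamma\sqrt{N} > 1$, which is exactly the choice of $N$; letting $N$ grow makes the ratio of the two sides arbitrarily large, in sharp contradiction with the constant $\kappa < 1/\gamma$ demanded by Assumption~\ref{ass:assumption-1}.

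The main subtlety I anticipate is matching this explicit construction with the compound quantifier structure in the statement: a single explicit $(\pi_b, \mathcal{D})$ is enough to prevent Assumption~\ref{ass:assumption-1} from being a universal hypothesis and hence to block the application of Theorem~\ref{th:theorem-2} that is claimed to establish Soft-SPIBB's safety. The construction also generalises naturally: for MDPs whose high-count state branches deterministically to many low-count successors, the same count pattern produces the same violation, demonstrating that the failure is structural rather than an artefact of the minimal two-state MDP.
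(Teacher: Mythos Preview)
Your construction does not prove the theorem as stated. The quantifier order is: there exists an MDP $M$ such that \emph{for every} behaviour policy $\pi_b$ and \emph{for every} dataset $\mathcal{D}$ visiting each state-action pair, the inequality~(\ref{eq:assumption-1-theorem}) holds at some $(s,a)$. You instead fix a particular dataset with $N_\mathcal{D}(s_1,a)=N$ and $N_\mathcal{D}(s_2,a)=1$. Your two-state cyclic MDP cannot be upgraded to handle all datasets: take any $\mathcal{D}$ with $N_\mathcal{D}(s_1,a)=N_\mathcal{D}(s_2,a)$ (which is essentially forced if the data comes from rolling out the unique policy, since visits alternate). Then at either state the inequality collapses to $1 > 1/\gamma$, which is false for $\gamma<1$. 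So on your MDP there are datasets for which Assumption~\ref{ass:assumption-1} \emph{does} hold with $\kappa=1<1/\gamma$, and the universal-in-$\mathcal{D}$ conclusion fails.

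The paper avoids this by using a fan-out MDP: one non-terminal state~$0$ with a single action that transitions uniformly to $n$ terminal states, with $n$ chosen so that $\sqrt{n}>1/\gamma$. The structural constraint $N_\mathcal{D}(0)=\sum_{i=1}^n N_\mathcal{D}(i)$ then lets Jensen's inequality (applied to $x\mapsto 1/x$ and $x\mapsto\sqrt{x}$) give
\[
\frac{1}{n}\sum_{i=1}^n \frac{1}{\sqrt{N_\mathcal{D}(i)}}\ \ge\ \frac{\sqrt{n}}{\sqrt{N_\mathcal{D}(0)}},
\]
which yields (\ref{eq:assumption-1-theorem}) at state~$0$ for \emph{every} admissible dataset, not just a hand-picked one. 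Your closing remark about ``branching to many low-count successors'' is in fact the right intuition, but it is precisely the branching (absent in your two-state cycle) that makes the argument dataset-independent. Your weaker observation that a single $(\pi_b,\mathcal{D})$ suffices to show Assumption~\ref{ass:assumption-1} is not universal is correct, but it leaves open the possibility that the assumption holds for the data one actually has; the paper's stronger statement rules this out on the exhibited MDP entirely.
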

\begin{figure}[!h]
  \centering
  \includegraphics[width = 3.5cm]{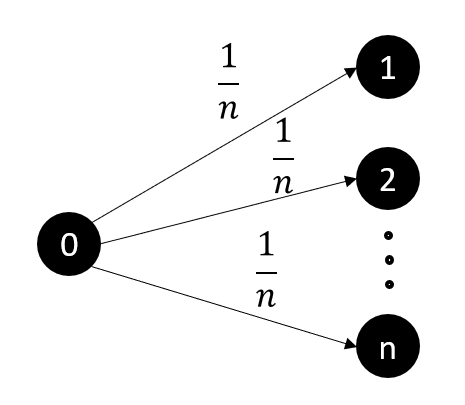}
  \caption{MDP with $n+1$ states, $n$ of them are final states and in the non-final state, there is only $1$ action, leading to one of the others with equal probability.\cite{scholl_icaart22}}
  \label{fig:example-mdp-assumption-1}
\end{figure}
\begin{proof}
 	Let $0<\gamma<1$ be arbitrary and $n\in\mathbb{N}$ be such that
	$\sqrt{n}>\frac{1}{\gamma}$.
		Let $M$ be the MDP displayed in Figure \ref{fig:example-mdp-assumption-1}. It
	has $n+1$ states, from which $n$ states are terminal states, labeled $1$, $2$,
	..., $n$. In the only non-terminal state $0$, there is only one action available
	and choosing it results in any of the terminal states with probability
	$\frac{1}{n}$. As there is only one
	action, one can omit the action in the notation of $e_P$ and there is only one possible behavior policy. So, Equation
	\ref{eq:assumption-1-theorem} can be reduced to
	\begin{equation} \label{eq:assumption-1-proof-one-action}
		\sum_{i=1}^n\frac{e_P(i)}{n} > \frac{e_P(0)}{\gamma} .
	\end{equation}
	Now, we show that 
	\begin{equation} \label{eq:assumption-1-proof-goal}
		\sum_{i=1}^n\frac{e_P(i)}{n} \geq \sqrt{n}e_P(0),
	\end{equation}
	which implies Equation
	\ref{eq:assumption-1-proof-one-action} as
	$\sqrt{n}>\frac{1}{\gamma}$. Let $\mathcal{D}$ denote the data collected on this MDP such that every state has been visited at least once. Thus, $N_\mathcal{D}(i)>0$\textemdash{}the number of visits to state $i$\textemdash{}holds for every $i$. Equation \ref{eq:assumption-1-proof-goal} is equivalent to 
	\begin{equation} \label{eq:assumption-1-proof-equivalent-goal}
		\frac{1}{n}\sum_{i=1}^n\frac{1}{\sqrt{N_\mathcal{D}(i)}} \geq
		\frac{\sqrt{n}}{\sqrt{N}}
	\end{equation}
	where $N=N_\mathcal{D}(0)=\sum_{i=1}^nN_\mathcal{D}(i)$.  Equation \ref{eq:assumption-1-proof-equivalent-goal} follows by applying Jensen's inequality once for the convex function $x\mapsto\frac{1}{x}$, restricted to $x>0$, and once for the concave function $x\mapsto\sqrt{x}$, also restricted to $x>0$:
	\begin{gather} \label{eq:end-proof-ass-1}
	    \frac{1}{n}\sum_{i=1}^n\frac{1}{\sqrt{N_\mathcal{D}(i)}} \geq 
	    \frac{1}{\frac{1}{n}\sum_{i=1}^n\sqrt{N_\mathcal{D}(i)}} \geq \nonumber
	    \frac{1}{\sqrt{\frac{1}{n}\sum_{i=1}^n N_\mathcal{D}(i)}} = 
	    \frac{1}{\sqrt{\frac{N}{n}}} = 
	    \frac{\sqrt{n}}{\sqrt{N}}.
	\end{gather}
	This concludes the proof.
\end{proof}
	The class of MDPs used in the proof and depicted in Figure
	\ref{fig:example-mdp-assumption-1} gives a good impression what kind of
	constellations are critical for Assumption 1. An MDP does not have to exhibit exactly
	the same structure to have similar effects, it might already be enough if there is
	a state-action pair from which a lot of different state-action pairs are
	exclusively accessible. 
	
	Although Assumption 1 is invalid in its
	generality shown at some specific class of MDPs, it might still hold on simple
	MDPs which are not built in order to disprove Assumption 1. One consideration
	here is that $n$ does not need to be especially big as the proof only required
	$\sqrt{n}>\frac{1}{\gamma}$. So, for any $\gamma>\frac{1}{\sqrt{2}}\approx
	0.707$ it suffices to choose $n=2$. 
	
    Furthermore, we tested Assumption \ref{ass:assumption-1} empirically on the Random MDPs benchmark considered in \cite{nadjahi_safe_2019} where we found for no discount factor greater than 0.6 a baseline policy and data set such that the assumption holds for all state-action pairs.\footnote{\url{https://github.com/Philipp238/Safe-Policy-Improvement-Approaches-on-Discrete-Markov-Decision-Processes/blob/master/auxiliary\_tests/assumption\_test.py}}
    Consequently, we conclude that Assumption \ref{ass:assumption-1} is not reasonable and, thus, Theorem \ref{th:theorem-2} cannot be relied upon.

\section{Advantageous Soft-SPIBB} \label{sec:advantageous-soft-spibb}

In the last section we have shown that there is no safety guarantee for the Soft-SPIBB algorithms. However, Nadjahi et al. \cite{nadjahi_safe_2019} proved another theorem which we present in its corrected form together with adapted algorithms to which it is applicable.

\subsection{Preliminaries}
We follow Nadjahi et al. \cite{nadjahi_safe_2019} and define the error function $e_Q$ as
	\begin{equation} \label{eq:error-function-q}
		e_Q(s,a)=\sqrt{\frac{2}{N_\mathcal{D}(s,a)}\log\frac{2|\mathcal{S}||\mathcal{A}|}{\delta}},
	\end{equation}
since that way we can bound the difference between the true action-value function and its estimate. Contrary to them, we use a different estimate as the bound again relies on Hoeffding's inequality \cite{Hoeffding} which can only be applied to a sum of random variables and, to the best of our knowledge, we do not know how it can be applied to $Q^{\pi_b}_{\hat{M}}(s,a)$.

For our proof it is necessary to use the Monte-Carlo estimate for
$Q^{\pi_b}_M(s,a)$ from data $\mathcal{D}$
\begin{equation} \label{eq:mc-estimate-q}
	\hat{Q}^{\pi_b}_\mathcal{D}(s,a)=\frac{1}{n}\sum_{i=1}^{n}G_{t_i}
\end{equation}
where $t_1,...,t_n$ are times such that $(S_{t_i},A_{t_i})=(s,a)$, for all
$i=1,...,n$. As $E[G_{t_i}]=Q^{\pi_b}_M(s,a)$ it is possible to apply the Hoeffding's inequality for $\hat{Q}^{\pi_b}_\mathcal{D}$, similar to how it is done in Nadjahi et al. \cite{nadjahi_safe_2019}.
\begin{lemma}  \label{lem:e_Q}
		For $e_Q$ as defined in Equation \ref{eq:error-function-q}, it holds for an
		arbitrary MDP $M$ with $G_{max}$ as an upper bound on the absolute value of the
		return in any state-action pair of any policy and behavior policy $\pi_b$, that
		\begin{equation} \label{eq:lem-e_Q}
			\mathbb{P}_\mathcal{D}\left(\forall (s,a)\in\mathcal{S}\times\mathcal{A}:
			|Q^{\pi_b}_M(s,a)-\hat{Q}^{\pi_b}_\mathcal{D}(s,a)|\leq e_Q(s,a)
			G_{max}\right)\geq 1-\delta,
		\end{equation}
		if the $G_{t_i}$ used to estimate $\hat{Q}^{\pi_b}_\mathcal{D}(s,a)$ are
		independent.
		Here, the subscript $\mathcal{D}$ of $\mathbb{P}$ emphasizes that the dataset
		generated by $\pi_b$ is the random quantity. 
	\end{lemma}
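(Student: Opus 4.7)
The plan is to show this is essentially a straightforward application of Hoeffding's inequality together with a union bound over the state-action pairs, in the spirit of how Nadjahi et al.\ derive the bound on $\hat P$ in \eqref{eq:bound-P}.

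First I would fix an arbitrary $(s,a)\in\mathcal{S}\times\mathcal{A}$ and condition on $N_\mathcal{D}(s,a)=n>0$. By the independence assumption, the returns $G_{t_1},\dots,G_{t_n}$ used to form $\hat{Q}^{\pi_b}_\mathcal{D}(s,a)$ are independent random variables, each lying in $[-G_{max},G_{max}]$ (hence of range $2G_{max}$), with common expectation $Q^{\pi_b}_M(s,a)$ by definition of the action-value function. Applying Hoeffding's inequality to their empirical mean gives
\begin{equation*}
    \mathbb{P}\!\left(\bigl|\hat{Q}^{\pi_b}_\mathcal{D}(s,a)-Q^{\pi_b}_M(s,a)\bigr|\geq t\right)
    \leq 2\exp\!\left(-\frac{2nt^2}{(2G_{max})^2}\right)
    = 2\exp\!\left(-\frac{nt^2}{2G_{max}^2}\right).
\end{equation*}

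Next I would choose $t = G_{max}\, e_Q(s,a)$ and verify by direct substitution of \eqref{eq:error-function-q} that the right-hand side equals $\delta/(|\mathcal{S}||\mathcal{A}|)$, so that the event $|\hat{Q}^{\pi_b}_\mathcal{D}(s,a)-Q^{\pi_b}_M(s,a)|\leq e_Q(s,a)G_{max}$ fails with probability at most $\delta/(|\mathcal{S}||\mathcal{A}|)$ for this fixed pair. A union bound over all $|\mathcal{S}||\mathcal{A}|$ state-action pairs then yields \eqref{eq:lem-e_Q}. The case $N_\mathcal{D}(s,a)=0$ is trivial since then $e_Q(s,a)=\infty$ by convention and the inequality holds vacuously.

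The only real subtlety, which I would flag explicitly, is the independence assumption on the $G_{t_i}$: since successive visits to $(s,a)$ within a single trajectory share portions of the same random future, the returns $G_{t_i}$ are in general \emph{not} independent (this is why the lemma restricts the estimator to independent returns, and why Nadjahi et al.'s bound on $Q^{\pi_b}_{\hat M}$ needed to be replaced). Assuming independence, Hoeffding applies cleanly; the constant $G_{max}$ (rather than $R_{max}$) appears because each $G_{t_i}$, not each reward, is the bounded random variable being averaged. Aside from that, the argument is a direct Hoeffding-plus-union-bound calculation and presents no genuine technical obstacle.
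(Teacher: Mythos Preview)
Your proposal is correct and follows essentially the same approach as the paper: fix a state-action pair, apply Hoeffding's inequality to the bounded i.i.d.\ returns $G_{t_i}$ to obtain a per-pair failure probability of $\delta/(|\mathcal{S}||\mathcal{A}|)$, and then take a union bound. The only cosmetic difference is that the paper first rescales the returns to $[0,1]$ via $(G_{t_i}+G_{max})/(2G_{max})$ before invoking the $[0,1]$-version of Hoeffding, whereas you apply the $[a,b]$-version directly; your write-up is in fact slightly more complete, since you make the union bound and the $N_\mathcal{D}(s,a)=0$ case explicit.
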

	\begin{proof}
		First, note that Hoeffding's inequality implies that 
		\begin{equation} \label{eq:hoeffding-absolute-value}
			\mathbb{P}(|\sum_{i=1}^nX_i-E[\sum_{i=1}^nX_i]|\geq t) \leq 2 \exp(-2nt^2),
		\end{equation}
		for $X_i\in[0,1]$ independent.
		Fix the state-action pair $(s,a)\in\mathcal{S}\times\mathcal{A}$. Then,
		\begin{equation}
			\begin{gathered}
				\mathbb{P}_\mathcal{D}\left(
				\left|Q^{\pi_b}_M(s,a)-\hat{Q}^{\pi_b}_\mathcal{D}(s,a)\right|> e_Q(s,a)
				G_{max}\right)\\
				=\mathbb{P}_\mathcal{D}\left( \left|\frac{(Q^{\pi_b}_M(s,a) +
					G_{max})-(\hat{Q}^{\pi_b}_\mathcal{D}(s,a)+G_{max})}{2G_{max}}\right|>\sqrt{\frac{1}{2N_\mathcal{D}(s,a)}\log\frac{2|\mathcal{S}||\mathcal{A}|}{\delta}}\right)\\
				\leq2\exp\left(-2N_\mathcal{D}(s,a)\frac{1}{2N_\mathcal{D}(s,a)}\log\frac{2|\mathcal{S}||\mathcal{A}|}{\delta}\right)
				= \frac{\delta}{|\mathcal{S}||\mathcal{A}|},
			\end{gathered}
		\end{equation}
		where it is possible to apply Hoeffding's inequality in the second step since
		\begin{equation}
			\frac{\hat{Q}^{\pi_b}_\mathcal{D}(s,a)+G_{max}}{2G_{max}}=\frac{1}{n}\sum_{i=1}^{n}\frac{G_{t_i}+G_{max}}{2G_{max}}
		\end{equation}
		is the empirical mean of independent variables which are bounded in $[0,1]$ as
		$G_{t_i}$ are independent and bounded in $[-G_{max},G_{max}]$ and
		\begin{equation}
			E[\hat{Q}^{\pi_b}_\mathcal{D}(s,a)]= E[G_{t_i}]= Q^{\pi_b}_M(s,a).
		\end{equation}
	\end{proof}
Another change in this inequality compared to Nadjahi et al. \cite{nadjahi_safe_2019} is, that we get $G_{max}$ instead of $V_{max}$, since $G_t$ does in general not lie in the interval $[-V_{max},V_{max}]$.

The first theorem from Nadjahi et al. \cite{nadjahi_safe_2019} relies on the algorithms to produce \emph{advantageous} policies. This property has to be slightly generalized to make it applicable to the Monte Carlo estimate of the action value function. 

\begin{definition} \label{def:advantageous-wrt-function}
	A policy $\pi$ is $\pi_b$-\emph{advantageous w.r.t.\ the function}
	$Q:\mathcal{S}\times\mathcal{A}\rightarrow\mathbb{R}$, if 
	\begin{equation} \label{eq:def-advantageous-wrt-function}
		\sum_a Q(s,a)\pi(a|s)\geq\sum_a Q(s,a)\pi_b(a|s)
	\end{equation}
	holds for all states $s\in\mathcal{S}$. 
\end{definition}
Interpreting $Q$ as an action-value function, Definition \ref{def:advantageous-wrt-function} yields that the policy $\pi$ chooses higher-valued actions than policy $\pi'$ for every state.

\subsection{Algorithms}
In this subsection we introduce the adaptation Adv-Approx-Soft-SPIBB which produces ($\pi_b,e_Q,\epsilon)$-constrained and $\pi_b$-advantageous w.r.t. $\hat{Q}^{\pi_b}_{\mathcal{D}}$ policies and is, thus, provably safe by Theorem \ref{th:theorem-1} in Section \ref{sec:soft-spibb-safety-guarantee}. Additionally, we present the heuristic adaptation Lower-Approx-Soft-SPIBB. As both algorithms function similarly to their predecessors by constraining the policy set, they also belong to the category ``Uncertainty as a restriction on the set of policies'' in the taxonomy in Figure \ref{fig:taxonomy}.

\subsubsection{Adv-Approx-Soft-SPIBB}
The advantageous version of the Soft-SPIBB algorithms solves the following problem in the Policy Improvement (PI) step:
\begin{subequations} \label{eq:optimization-problem-PI-adv}
\begin{equation}
    \pi'=\argmax_\pi\sum_{a\in\mathcal{A}}Q^\pi_{\hat{M}}(s,a)\pi(a|s) \text{, subject to:}
\end{equation}
\begin{equation}
    \pi'(\cdot|s) \text{ being a probability density over }\mathcal{A}, 
\end{equation}
\begin{equation}
    \pi^{(i+1)} \text{ being } (\pi_b,\epsilon, e)\text{-constrained} \text{ and }
\end{equation}
\begin{equation} \label{eq:optimization-problem-PI-adv-constraint-3}
    \pi^{(i+1)} \text{ being } \pi_b\text{-advantageous w.r.t. }
\hat{Q}^{\pi_b}_\mathcal{D}.
\end{equation}
\end{subequations}

The original Soft-SPIBB algorithms solve this optimization problem without the constraint in Equation \ref{eq:optimization-problem-PI-adv-constraint-3} as shown in Section \ref{sec:soft-spibb-algorithms}. Adv-Approx-Soft-SPIBB works exactly as its predecessor Approx-Soft-SPIBB except that it keeps an additional budgeting variable ensuring that the new policy is $\pi_b$-advantageous w.r.t. $\hat{Q}^{\pi_b}_\mathcal{D}$.
The derivation of a successor algorithm of Exact-Soft-SPIBB is straightforward since the constraint in Equation \ref{eq:optimization-problem-PI-adv-constraint-3} is linear, however, we observed for Exact-Soft-SPIBB and its successor numerical issues, so, we omit them in the experiments in Section \ref{sec:experiments}.

\subsubsection{Lower-Approx-Soft-SPIBB}

To introduce the heuristic adaptation of Approx-Soft-SPIBB we need a relaxed version of the constrainedness property.
\begin{definition} \label{def:lower-constrained}
	A policy $\pi$ is $(\pi_b,\epsilon,e)$-\emph{lower-constrained} w.r.t.\ a baseline
	policy $\pi_b$, an error function $e$, and a hyper-parameter $\epsilon$, if 
	\begin{equation} \label{eq:lower-constrained}
		\sum_{a\in\mathcal{A}}e(s,a) \max\{0, \pi(a|s)-\pi_b(a|s)\}\leq\epsilon
	\end{equation}
	holds for all states $s\in\mathcal{S}$.
\end{definition}

This definition does not punish a change in uncertain state-action pairs if the probability of choosing it is decreased, which follows the same logic as the empirically very successful adaptation $\Pi_{\leq b}$-SPIBB \cite{laroche_safe_2019}. The optimization problem solved by Lower-Approx-Soft-SPIBB is the following:
\begin{subequations} \label{eq:optimization-problem-PI-lower-approx-soft}
\begin{equation}
    \pi'=\argmax_\pi\sum_{a\in\mathcal{A}}Q^\pi_{\hat{M}}(s,a)\pi(a|s) \text{, subject to:}
\end{equation}
\begin{equation}
    \pi'(\cdot|s) \text{ being a probability density over }\mathcal{A} \text{ and }
\end{equation}
\begin{equation}
    \pi^{(i+1)} \text{ being } (\pi_b,\epsilon, e)\text{-lower-constrained}.
\end{equation}
\end{subequations}

Even though Lower-Approx-Soft-SPIBB is\textemdash{}just as its predecessor Approx-Soft-SPIBB\textemdash{}not provably safe, the experiments in Section \ref{sec:experiments} show that it performs empirically the best out of the whole SPIBB family.
\subsection{Safety Guarantee}
In this subsection, we prove Theorem \ref{th:theorem-1} which states that algorithms that produce $(\pi_b,\epsilon,e_Q)$-constrained policies which are $\pi_b$-advantageous w.r.t.\ $\hat{Q}^{\pi_b}_{\mathcal{D}}$\textemdash{}as Adv-Approx-Soft-SPIBB does\textemdash{}are safe. Before that, it is necessary to introduce some notation from the predecessor paper \cite{laroche_safe_2019}. To enable the use of matrix operations all the quantities introduced so far are denoted as vectors and matrices in the following. The value functions become
\begin{equation} \label{eq:theorem-1-notation-action-value-function}
	Q^\pi=(Q^\pi(1,1),...,Q^\pi(1,|\mathcal{A}|),Q^\pi(2,1),...,Q^\pi(|\mathcal{S}|,|\mathcal{A}|))\in\mathbb{R}^{|\mathcal{S}||\mathcal{A}|}
\end{equation}
and
\begin{equation} \label{eq:theorem-1-notation-state-value-function}
	V^\pi=(V^\pi(1),...,V^\pi(|\mathcal{S}|))\in\mathbb{R}^{|\mathcal{S}|}.
\end{equation}
Similarly, the reward vector is
\begin{equation} \label{eq:theorem-1-notation-reward}
	R=(R(1,1),...,R(1,|\mathcal{A}|),R(2,1),...,R(|\mathcal{S}|,|\mathcal{A}|))\in\mathbb{R}^{|\mathcal{S}||\mathcal{A}|}.
\end{equation}
The policy is denoted by
\begin{equation} \label{eq:theorem-1-notation-policy}
	\pi=(\pi_{\cdot 1} \hdots \pi_{\cdot |\mathcal{S}|})\in\mathbb{R}^{|\mathcal{S}||\mathcal{A}|\times|\mathcal{S}|},
\end{equation}
with columns $\pi_{kj}=\pi(k|j)$ for $(j-1)|\mathcal{A}|+1\leq k \leq j|\mathcal{A}|$ and $\pi_{kj}=0$ otherwise. Lastly, the transition probabilities are
\begin{equation} \label{eq:theorem-1-notation-transition-probabilities}
	P=\left(\begin{array}{ccccc}
		P\left(1\bigr\rvert1,1\right) & \cdots & P\left(1\bigr\rvert1,|\mathcal{A}|\right) & \cdots &
		P\left(1\bigr\rvert|\mathcal{S}|,|\mathcal{A}|\right)\\
		\vdots & & \vdots & & \vdots\\
		P\left(|\mathcal{S}|\bigr\rvert1,1\right) & \cdots & P\left(|\mathcal{S}|\bigr\rvert1,|\mathcal{A}|\right) & \cdots &
		P\left(|\mathcal{S}|\bigr\rvert|\mathcal{S}|,|\mathcal{A}|\right)
		
	\end{array}\right)\in\mathbb{R}^{|\mathcal{S}|\times|\mathcal{S}||\mathcal{A}|}.
\end{equation}
Note that we omit the current MDP $M$ in all these notations, as we always
do, if it is clear or not relevant which MDP is currently used.
Using this notation, we can present Proposition 1 without its proof from Nadjahi et al. \cite{nadjahi_safe_2019}.
\begin{proposition} \label{prop:Proposition-1}
	Let $\pi_1$ and $\pi_2$ be two policies on an MDP $M$. Then
	\begin{equation} \label{eq:prop}
		V^{\pi_1}_M - V^{\pi_2}_M = Q^{\pi_2}_M(\pi_1-\pi_2)d^{\pi_1}_M
	\end{equation}
	holds.
\end{proposition}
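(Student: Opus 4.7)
The plan is to derive the performance-difference identity by the classical telescoping trick, working in the matrix notation of Equations (17)–(20). In this notation the Bellman equation reads $Q^\pi = R + \gamma V^\pi P$ and $V^\pi = Q^\pi \pi$, so that $V^{\pi_1} - V^{\pi_2} = Q^{\pi_1}\pi_1 - Q^{\pi_2}\pi_2$. Adding and subtracting $Q^{\pi_2}\pi_1$ yields the decomposition
\begin{equation*}
V^{\pi_1} - V^{\pi_2} \;=\; (Q^{\pi_1} - Q^{\pi_2})\pi_1 \;+\; Q^{\pi_2}(\pi_1-\pi_2),
\end{equation*}
which isolates the target term $Q^{\pi_2}(\pi_1-\pi_2)$ plus a residual in the value-function difference.

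Next I would use the Bellman equation to expand the residual. Since $Q^{\pi_i} = R + \gamma V^{\pi_i} P$ for $i=1,2$, we have $Q^{\pi_1} - Q^{\pi_2} = \gamma (V^{\pi_1} - V^{\pi_2}) P$. Substituting and letting $\Delta := V^{\pi_1} - V^{\pi_2}$ gives the fixed-point relation
\begin{equation*}
\Delta \;=\; \gamma\, \Delta\, P\pi_1 \;+\; Q^{\pi_2}(\pi_1 - \pi_2),
\end{equation*}
so that $\Delta (I - \gamma P\pi_1) = Q^{\pi_2}(\pi_1-\pi_2)$ and hence $\Delta = Q^{\pi_2}(\pi_1-\pi_2)(I - \gamma P\pi_1)^{-1}$, provided the inverse exists (which follows from $\gamma < 1$ and $P\pi_1$ being a stochastic matrix, making the Neumann series converge).

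Finally I would identify $(I - \gamma P\pi_1)^{-1}$ with $d^{\pi_1}_M$. Unfolding the matrix product, $(P\pi_1)(s'|s) = \sum_a P(s'|s,a)\pi_1(a|s)$, which is exactly the one-step transition matrix under $\pi_1$, so $(P\pi_1)^t(s'|s) = \mathbb{P}(S_t = s' \mid S_0 = s)$ and the Neumann expansion
\begin{equation*}
(I - \gamma P\pi_1)^{-1} \;=\; \sum_{t=0}^{\infty} \gamma^t (P\pi_1)^t
\end{equation*}
matches the definition of $d^{\pi_1}_M$ stated right after Theorem 2. Substituting gives $\Delta = Q^{\pi_2}(\pi_1-\pi_2)\,d^{\pi_1}_M$, which is the claim.

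The routine algebra is short; the only real obstacle is bookkeeping the vector/matrix conventions of Equations (17)–(20)—in particular recognizing that $V^\pi$ and $Q^\pi$ must be read as row vectors so that the products $Q^\pi \pi$, $V^\pi P$ and $Q^{\pi_2}(\pi_1-\pi_2)d^{\pi_1}_M$ all have the right shape, and confirming that the matrix $P\pi_1$ indeed encodes the closed-loop transition kernel under $\pi_1$ so its Neumann series coincides with $d^{\pi_1}_M$.
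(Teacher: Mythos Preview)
Your argument is correct: the telescoping decomposition $V^{\pi_1}-V^{\pi_2}=(Q^{\pi_1}-Q^{\pi_2})\pi_1+Q^{\pi_2}(\pi_1-\pi_2)$, the Bellman identity $Q^{\pi_1}-Q^{\pi_2}=\gamma(V^{\pi_1}-V^{\pi_2})P$, and the identification $(I-\gamma P\pi_1)^{-1}=d^{\pi_1}_M$ via the Neumann series all go through exactly as you describe in the row-vector conventions of Equations~(17)--(20). Note, however, that the paper does not actually supply a proof of Proposition~\ref{prop:Proposition-1}; it merely states the result and refers to Nadjahi et~al.\ for the argument, so there is no in-paper proof to compare against. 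Your derivation is the standard performance-difference lemma and is what one would expect the cited proof to contain.
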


Proposition \ref{prop:Proposition-1} yields a decomposition of the
difference of the state value function for two different policies on the same
MDP which is utilized to prove the corrected version of Theorem 1 from
Nadjahi et al. \cite{nadjahi_safe_2019}.

\begin{theorem}\label{th:theorem-1}
	For any $(\pi_b,\epsilon,e_Q)$-constrained policy that is $\pi_b$-advantageous
	w.r.t.\ $\hat{Q}^{\pi_b}_{\mathcal{D}}$, which is estimated with independent
	returns for each state-action pair, the following inequality holds:
	\begin{equation} \label{eq:theorem-1}
		\mathbb{P}_\mathcal{D}\left(\forall
		s\in\mathcal{S}:V_{M^*}^{\pi}(s)-V_{M^*}^{\pi_b}(s)\geq-\frac{\epsilon
			G_{max}}{1-\gamma}\right)\geq 1-\delta,\nonumber
	\end{equation}
	where $M^*$ is the true MDP on which the data $\mathcal{D}$ gets sampled by the
	baseline policy $\pi_b$, $0\leq\gamma<1$ is the discount factor, and $\delta>0$
	is the safety parameter for $e_Q$.
\end{theorem}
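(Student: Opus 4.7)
The plan is to apply Proposition~\ref{prop:Proposition-1} with $\pi_1 = \pi$ and $\pi_2 = \pi_b$ on the true MDP $M^*$, which expresses $V^{\pi}_{M^*}(s) - V^{\pi_b}_{M^*}(s)$ as a weighted sum over next-state visitations $d^\pi_{M^*}(\cdot|s)$ of the per-state quantity
$$A(s') := \sum_{a} Q^{\pi_b}_{M^*}(s', a)\bigl(\pi(a|s') - \pi_b(a|s')\bigr).$$
The main task reduces to lower-bounding $A(s')$ uniformly in $s'$ on a high-probability event.

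To exploit the two properties of $\pi$, I would split $Q^{\pi_b}_{M^*} = \hat{Q}^{\pi_b}_{\mathcal{D}} + (Q^{\pi_b}_{M^*} - \hat{Q}^{\pi_b}_{\mathcal{D}})$, obtaining
$$A(s') = \underbrace{\sum_a \hat{Q}^{\pi_b}_{\mathcal{D}}(s',a)\bigl(\pi(a|s')-\pi_b(a|s')\bigr)}_{\geq\, 0 \text{ by advantageousness}} \;+\; \sum_a \bigl(Q^{\pi_b}_{M^*}(s',a)-\hat{Q}^{\pi_b}_{\mathcal{D}}(s',a)\bigr)\bigl(\pi(a|s')-\pi_b(a|s')\bigr).$$
The first sum is nonnegative by Definition~\ref{def:advantageous-wrt-function}, which is exactly why the generalized notion of advantageousness (w.r.t.\ an arbitrary function, not just $Q^{\pi}_{\hat{M}}$) was needed. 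For the second sum, I would condition on the event of Lemma~\ref{lem:e_Q}, which occurs with probability at least $1-\delta$, and bound it in absolute value by $\sum_a e_Q(s',a) G_{max} |\pi(a|s') - \pi_b(a|s')|$; this is at most $\epsilon G_{max}$ by the $(\pi_b,\epsilon,e_Q)$-constrainedness. Hence $A(s') \geq -\epsilon G_{max}$ on the good event.

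To finish, I would plug this bound into the decomposition of Proposition~\ref{prop:Proposition-1}. Since $d^\pi_{M^*}(s'|s) \geq 0$ and $\sum_{s'} d^\pi_{M^*}(s'|s) = \sum_{t=0}^\infty \gamma^t = \frac{1}{1-\gamma}$ directly from the definition of $d^\pi_M$, the same good event yields $V^{\pi}_{M^*}(s) - V^{\pi_b}_{M^*}(s) \geq -\frac{\epsilon G_{max}}{1-\gamma}$ simultaneously for all $s \in \mathcal{S}$, giving the claim.

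The only subtle point, and where I expect the author's argument to diverge from Nadjahi et al.~\cite{nadjahi_safe_2019}, is the choice of estimator: the concentration bound in Lemma~\ref{lem:e_Q} is available for the Monte-Carlo estimate $\hat{Q}^{\pi_b}_{\mathcal{D}}$ (and uses $G_{max}$ rather than $V_{max}$), so the advantageousness constraint in the PI step \emph{must} be stated relative to $\hat{Q}^{\pi_b}_{\mathcal{D}}$ for the splitting trick above to align. Once this bookkeeping is respected, the rest of the argument is a short composition of Hölder/triangle estimates with Proposition~\ref{prop:Proposition-1}; there is no analogue of the failed Assumption~\ref{ass:assumption-1} to worry about, because the bound is obtained directly from a single application of the discounted-visit decomposition rather than from iterated propagation of $e_P$.
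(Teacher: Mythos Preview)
Your proposal is correct and follows essentially the same approach as the paper: apply Proposition~\ref{prop:Proposition-1}, split $Q^{\pi_b}_{M^*}$ into $\hat{Q}^{\pi_b}_{\mathcal{D}}$ plus the estimation error, handle the first part via advantageousness and the second via Lemma~\ref{lem:e_Q} together with $(\pi_b,\epsilon,e_Q)$-constrainedness, and finish using $\sum_{s'} d^{\pi}_{M^*}(s'|s)=\frac{1}{1-\gamma}$. The only cosmetic difference is that the paper carries out the argument in matrix/vector notation and invokes a small H\"older-type lemma (Lemma~\ref{lem:hoelder-variation}) to bound $\lVert (Q^{\pi_b}_{M^*}-\hat{Q}^{\pi_b}_{\mathcal{D}})(\pi-\pi_b)\, d^{\pi}_{M^*}\rVert_1$, whereas you bound the per-state quantity $A(s')$ pointwise first and then sum against the nonnegative weights $d^{\pi}_{M^*}(s'|s)$; the two routes are equivalent and yours is arguably the more elementary phrasing.
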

\begin{proof}
		We start by applying Proposition \ref{prop:Proposition-1}:
		\begin{equation} \label{eq:theorem-1-proof-2-summands}
			\begin{gathered}
				V^{\pi}_M - V^{\pi_b}_M = Q^{\pi_b}_M(\pi-\pi_b)d^{\pi}_M = (Q^{\pi_b}_M -
				\hat{Q}^{\pi_b}_\mathcal{D} + \hat{Q}^{\pi_b}_\mathcal{D})(\pi-\pi_b)d^{\pi}_M\\
				= (Q^{\pi_b}_M - \hat{Q}^{\pi_b}_\mathcal{D})(\pi-\pi_b)d^{\pi}_M +
				\hat{Q}^{\pi_b}_\mathcal{D}(\pi-\pi_b)d^{\pi}_M.
			\end{gathered}
		\end{equation}
		The goal of this proof is to lower bound these two summands. For the first
		summand, notice that it is a row vector and, so, a bound on the $\ell^1$ norm gives a
		simultaneous bound for all states. Thus, applying a  vector-matrix version of
		Hoelder's inequality, which is proven in Lemma \ref{lem:hoelder-variation} later on, yields
		\begin{equation} \label{eq:hoelder-application}
			\begin{gathered}
				||(Q^{\pi_b}_M - \hat{Q}^{\pi_b}_\mathcal{D})(\pi-\pi_b)d^{\pi}_M||_1 \leq
				||((Q^{\pi_b}_M -
				\hat{Q}^{\pi_b}_\mathcal{D})(\pi-\pi_b))^T||_\infty||d^{\pi}_M||_1,
			\end{gathered}
		\end{equation}
		where all the norms refer to matrix norms.
		Using Lemma \ref{lem:e_Q} and that $\pi$ is  $(\pi_b,e_Q,\epsilon)$-constrained yields that
		\begin{equation} \label{eq:theorem-1-proof-summand-1-factor-1}
			\begin{gathered}
				||\left((Q^{\pi_b}_M-\hat{Q}^{\pi_b}_\mathcal{D})(\pi-\pi_b)\right)^T||_\infty\leq
				\max_s\sum_a|Q^{\pi_b}_M(s,a) -
				\hat{Q}^{\pi_b}_\mathcal{D}(s,a)||\pi(a|s)-\pi_b(a|s)|\\
				\leq \max_s\sum_ae_Q(s,a)G_{max}|\pi(a|s)-\pi_b(a|s)|
				\leq \epsilon G_{max}
			\end{gathered}
		\end{equation}
		holds with probability $1-\delta$. The next step is to compute
		\begin{equation} \label{eq:theorem-1-proof-summand-1-factor-2}
			\begin{gathered}
				||d^{\pi}_M||_1 = \max_s\sum_{s'}\sum_{t=0}^{\infty}\gamma^t
				\mathbb{P}(S_t=s'|X_t\sim P\pi S_t, S_0=s)\\
				= \max_s\sum_{t=0}^{\infty}\gamma^t \sum_{s'}\mathbb{P}(S_t=s'|X_t\sim P\pi
				S_t, S_0=s) = \max_s\sum_{t=0}^{\infty}\gamma^t = \frac{1}{1-\gamma}.
			\end{gathered}
		\end{equation}
		Thus, the first summand can be lower bounded by $-\frac{\epsilon
			G_{max}}{1-\gamma}$ with probability $1-\delta$. To lower bound the second
		summand, note that all entries of $d^\pi_M$ are non-negative, so Theorem
		\ref{th:theorem-1} follows if $\hat{Q}^{\pi_b}_\mathcal{D}(\pi-\pi_b)(s)\geq0$
		for every state $s\in\mathcal{S}$. So, let $s\in\mathcal{S}$ be arbitrary. Then,
		\begin{equation} 
			\begin{gathered}
				\hat{Q}^{\pi_b}_\mathcal{D}(\pi-\pi_b)(s) = \sum_a
				\hat{Q}^{\pi_b}_\mathcal{D}(s,a)(\pi(a|s)-\pi_b(a|s))
				\geq 0
			\end{gathered}
		\end{equation}
		holds, since $\pi$ is  $\pi_b$-advantageous w.r.t.
		$\hat{Q}^{\pi_b}_\mathcal{D}$.
	\end{proof}
	
	As mentioned in the proof, it remains to show that the inequality in Equation
	\ref{eq:hoelder-application} holds.
		
	\begin{lemma} \label{lem:hoelder-variation}
		Let $n\in\mathbb{N}$, $v\in\mathbb{R}^n$ and $A\in\mathbb{R}^{n\times n}$ be
		arbitrary. Then,
		\begin{equation}
			||v^TA||_1\leq ||v||_\infty||A||_1
		\end{equation}
		holds, where all norms refer to matrix norms.
	\end{lemma}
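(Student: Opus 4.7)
The plan is to unpack the matrix-norm definitions and apply the triangle inequality entry-by-entry. Recall that for a matrix $B \in \mathbb{R}^{m \times k}$, the induced $\ell^1$ matrix norm is the maximum absolute column sum, $\|B\|_1 = \max_j \sum_i |B_{ij}|$, and the induced $\ell^\infty$ norm is the maximum absolute row sum. In particular, viewing $v \in \mathbb{R}^n$ as an $n \times 1$ matrix gives $\|v\|_\infty = \max_i |v_i|$, while the $1 \times n$ row vector $v^T A$ has $\|v^T A\|_1 = \max_j |(v^T A)_j|$ since each column contains a single entry.

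First I would expand the product componentwise, $(v^T A)_j = \sum_{i=1}^n v_i A_{ij}$, and apply the ordinary triangle inequality to obtain $|(v^T A)_j| \le \sum_{i=1}^n |v_i|\,|A_{ij}|$. Next I would uniformly bound $|v_i| \le \|v\|_\infty$, pull this constant out of the sum over $i$, and take the maximum over $j$ to recognise the remaining quantity as $\|A\|_1$. Chaining these steps gives
\begin{equation*}
\|v^T A\|_1 = \max_j \Big|\sum_{i} v_i A_{ij}\Big| \le \max_j \sum_i |v_i|\,|A_{ij}| \le \|v\|_\infty \max_j \sum_i |A_{ij}| = \|v\|_\infty \|A\|_1,
\end{equation*}
which is the claimed inequality.

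There is no real technical obstacle here; the result is a mixed-norm submultiplicativity statement and the proof is a one-line triangle-inequality argument. The only point that requires any care is being explicit about conventions: since $v$ is implicitly regarded as a column vector while $v^T A$ is a row vector, one must verify that the matrix norms $\|\cdot\|_1$ and $\|\cdot\|_\infty$ specialise to the familiar $\max_i |v_i|$ and $\max_j |(v^T A)_j|$ in these degenerate shapes. Once this is observed, the estimate is immediate and matches exactly how the lemma is invoked in Equation \ref{eq:hoelder-application} of the proof of Theorem \ref{th:theorem-1}.
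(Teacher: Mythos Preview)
Your proof is correct and follows essentially the same approach as the paper: both compute $\|v^TA\|_1$ as the maximum over columns of $|v^Ta^j|$ and then bound each such inner product by $\|v\|_\infty\|a^j\|_1$. The only cosmetic difference is that the paper phrases the key step as H\"older's inequality for the $\ell^\infty/\ell^1$ pairing, whereas you unfold it via the triangle inequality and the bound $|v_i|\le\|v\|_\infty$, which is of course the same thing.
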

	\begin{proof}
		Denote the columns of $A$ by $a^1,...,a^n$. Then,
		\begin{equation} 
			\begin{gathered}
				||v^TA||_1 = ||(v^Ta^1,...,v^Ta^n)||_1 = \max_{i=1,...,n}\{|v^Ta^i|\}\\
				\leq \max_{i=1,...,n}\{||v||_\infty ||a^i||_1\} =||v||_\infty
				\max_{i=1,...,n}\{ ||a^i||_1\} = ||v||_\infty||A||_1
			\end{gathered}
		\end{equation}
		where we used the definition of the $\ell^1$ norm for matrices as the maximum
		absolute column sum in the second and the last step and Hoeffding's inequality in the third step.
	\end{proof}

One possibility to improve Theorem \ref{th:theorem-1} is to increase the
sharpness of the error function. For this, note that for Theorem \ref{th:theorem-1} it is sufficient that the error function fulfills the inequality from Lemma \ref{lem:e_Q}. One disadvantage of using
Hoeffding's inequality to define the error function $e_Q$ as in Equation \ref{eq:error-function-q} is that it only incorporates the number of observations
to estimate the uncertainty, while it
neglects additional knowledge of the data, e.g., its variance. It should be clear that less observations of a quantity are necessary to
estimate it with high confidence, if the quantity has a low variance. This is
incorporated in the bound derived by Maurer and Pontil \cite{mpeb}.
\begin{lemma} \label{lem:mpeb}
	\textbf{(Maurer and Pontil)} Let $X_1,...,X_n$ be i.i.d. variables with values
	in $[0,1]$ and denote with $X$ the random vector $X=(X_1,...,X_n)$. Then,
	\begin{equation} \label{eq:mpeb}
		\mathbb{P}\left(\bar{X}-E[\bar{X}]\leq
		\sqrt{\frac{2\widehat{\Var}(X)\log\frac{2}{\delta}}{n}} +
		\frac{7\log\frac{2}{\delta}}{3(n-1)} \right) \geq 1 - \delta
	\end{equation}
	for any $\delta > 0$. Here, $\widehat{\Var}(X) = \frac{1}{n(n-1)} \sum_{1\leq i < j \leq n} (Z_i - Z_j)^2$ denotes the sample variance.
\end{lemma}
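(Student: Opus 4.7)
The plan is to derive the empirical Bernstein bound by combining two classical one-sided concentration inequalities, each holding with probability at least $1-\delta/2$, and closing with a union bound. The first inequality controls $\bar{X} - E[\bar{X}]$ in terms of the true variance $\sigma^2 := \Var(X_1)$; the second replaces $\sigma$ by the empirical standard deviation $\sqrt{\widehat{\Var}(X)}$.

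For the first step, I would invoke the standard one-sided Bernstein inequality for i.i.d.\ variables bounded in $[0,1]$, which states that with probability at least $1-\delta/2$,
\begin{equation}
\bar{X} - E[\bar{X}] \leq \sqrt{\frac{2\sigma^2 \log(2/\delta)}{n}} + \frac{\log(2/\delta)}{3n}.
\end{equation}
This follows from a Chernoff-type argument on the moment generating function of the centered summands, using $\Var(X_i)=\sigma^2$ and the deterministic bound $|X_i - E[X_i]| \leq 1$, and is essentially a textbook fact.

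For the second step, I would exploit a bounded-differences property of the map $f(x_1,\ldots,x_n) := \sqrt{\widehat{\Var}(x)}$: because $\widehat{\Var}$ is a U-statistic with kernel $(x_i - x_j)^2$ and each $x_i \in [0,1]$, altering a single coordinate perturbs $f$ by at most a quantity of order $1/\sqrt{n-1}$. McDiarmid's inequality, combined with Jensen's inequality $E\!\left[\sqrt{\widehat{\Var}(X)}\right] \leq \sqrt{E[\widehat{\Var}(X)]} = \sigma$, then yields a one-sided bound of the form
\begin{equation}
\sigma \leq \sqrt{\widehat{\Var}(X)} + c\sqrt{\frac{\log(2/\delta)}{n-1}},
\end{equation}
again with probability at least $1-\delta/2$, for some explicit constant $c$.

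Finally, a union bound merges the two good events, and substituting the latter inequality into the former yields the leading contribution $\sqrt{2\widehat{\Var}(X)\log(2/\delta)/n}$ together with lower-order corrections of order $\log(2/\delta)/(n-1)$ arising from the cross term (absorbed via AM-GM, $\sqrt{ab}\leq\tfrac{1}{2}(a+b)$) and the Bernstein remainder $\log(2/\delta)/(3n)$. I expect the main obstacle to be purely quantitative: tightening all residual constants so that exactly the single prefactor $7/3$ from the statement emerges rather than a larger universal constant. The sharpest known route, followed by Maurer and Pontil themselves, is to apply the entropy / self-bounding function machinery directly to $\widehat{\Var}$ instead of passing through McDiarmid on $\sqrt{\widehat{\Var}}$; the Bernstein-plus-McDiarmid plan sketched here already recovers the qualitative shape of the bound and can be sharpened to the stated constants with additional bookkeeping.
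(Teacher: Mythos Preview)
The paper does not prove this lemma at all: it is stated as a quotation from Maurer and Pontil~\cite{mpeb} and immediately used to define the alternative error function $e_Q^B$. There is therefore no ``paper's own proof'' to compare against; the authors treat the result as an external black box.

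Your sketch is a faithful outline of the original Maurer--Pontil argument (Bernstein for the mean in terms of the true variance, a concentration bound relating $\sigma$ to $\sqrt{\widehat{\Var}(X)}$, then a union bound), and you correctly flag that getting the exact constant $7/3$ requires the self-bounding analysis of $\widehat{\Var}$ rather than a naive McDiarmid step. So the proposal is sound as a proof strategy, but it goes well beyond what the present paper does, which is simply to cite the inequality.
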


Following Lemma \ref{lem:mpeb} we define
\begin{equation} \label{eq:error-function-q-mpeb}
	e_Q^{B}(s,a) = 2 \left(\sqrt{\frac{2 \widehat{\Var}(\frac{G 	}{2G_{max}})\log(\frac{4|\mathcal{S}||\mathcal{A}|}{\delta})}{N_\mathcal{D}(s,a)}}
	+
	\frac{7\log(\frac{4|\mathcal{S}||\mathcal{A}|}{\delta})}{3(N_\mathcal{D}(s,a)-1)}
	\right)
\end{equation}
with $G=(G_{t_1},...,G_{t_n})$. With exactly the same proof as for Lemma
\ref{lem:e_Q} for $e_Q$, it is possible to show the same inequality for $e_Q^B$, if one
applies the empirical version of Bennett's inequality instead of Hoeffding's
inequality:

\begin{lemma}  \label{lem:e_Q^B}
	For $e_Q^B$ as defined in Equation \ref{eq:error-function-q-mpeb}, it holds for
	an arbitrary MDP $M$ with $G_{max}$ as an upper bound on the absolute value of
	the return in any state-action pair of any policy and behavior policy $\pi_b$,
	that
	\begin{equation} \label{eq:lem-e_Q^B}
		\mathbb{P}_\mathcal{D}\left(\forall (s,a)\in\mathcal{S}\times\mathcal{A}:
		|Q^{\pi_b}_M(s,a)-\hat{Q}^{\pi_b}_\mathcal{D}(s,a)|\leq e_Q^B(s,a)
		G_{max}\right)\geq 1-\delta,
	\end{equation}
	if the $G_{t_i}$ used to estimate $\hat{Q}^{\pi_b}_\mathcal{D}$ are
	independent.
\end{lemma}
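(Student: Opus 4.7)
The plan is to follow the proof of Lemma \ref{lem:e_Q} step by step, substituting the empirical Bernstein inequality of Maurer and Pontil (Lemma \ref{lem:mpeb}) for Hoeffding's inequality at the decisive step. As in that earlier proof, I would fix a state-action pair $(s,a)\in\mathcal{S}\times\mathcal{A}$ with $n=N_\mathcal{D}(s,a)$ visits and perform the affine rescaling $X_i=(G_{t_i}+G_{max})/(2G_{max})\in[0,1]$. By hypothesis the returns $G_{t_i}$ are independent, so the $X_i$ form an i.i.d.\ sample to which Lemma \ref{lem:mpeb} applies. A direct calculation gives $\bar X-\mathbb{E}[\bar X]=(\hat Q^{\pi_b}_\mathcal{D}(s,a)-Q^{\pi_b}_M(s,a))/(2G_{max})$, and by shift-invariance of the sample variance, $\widehat{\Var}(X)=\widehat{\Var}(G/(2G_{max}))$.

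Next I would invoke Lemma \ref{lem:mpeb} with confidence parameter $\delta/(2|\mathcal{S}||\mathcal{A}|)$, which yields, with probability at least $1-\delta/(2|\mathcal{S}||\mathcal{A}|)$,
$$\bar X - \mathbb{E}[\bar X] \;\leq\; \sqrt{\frac{2\,\widehat{\Var}(X)\log\frac{4|\mathcal{S}||\mathcal{A}|}{\delta}}{n}}+\frac{7\log\frac{4|\mathcal{S}||\mathcal{A}|}{\delta}}{3(n-1)}.$$
Because Maurer--Pontil is one-sided, I would apply the same lemma a second time to $1-X_i\in[0,1]$---whose sample variance agrees with that of $X_i$---to obtain the matching lower tail bound. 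A union bound over the two tails then produces a two-sided deviation inequality holding with probability at least $1-\delta/(|\mathcal{S}||\mathcal{A}|)$. Multiplying through by $2G_{max}$ converts the resulting bound into $|\hat Q^{\pi_b}_\mathcal{D}(s,a)-Q^{\pi_b}_M(s,a)|\leq G_{max}\cdot e_Q^B(s,a)$, which exactly matches Equation \ref{eq:error-function-q-mpeb}. A final union bound over the at most $|\mathcal{S}||\mathcal{A}|$ state-action pairs delivers the simultaneous guarantee stated in the lemma.

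I do not anticipate a genuine obstacle; the derivation is mechanical once the rescaling is in place. The items that demand care are purely the bookkeeping of constants: the outer factor of $2$ in the definition of $e_Q^B$ arises because the rescaled variable lives on an interval of length $1$ while the original return lives on an interval of length $2G_{max}$, and the appearance of $\log(4|\mathcal{S}||\mathcal{A}|/\delta)$ inside $e_Q^B$---as opposed to $\log(2|\mathcal{S}||\mathcal{A}|/\delta)$ inside $e_Q$---reflects the doubled failure budget needed to upgrade the one-sided Bernstein bound to a two-sided one before the final union bound over state-action pairs is taken.
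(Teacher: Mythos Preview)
Your proposal is correct and follows essentially the same approach as the paper, which simply states that the proof is identical to that of Lemma~\ref{lem:e_Q} with the empirical Bernstein inequality substituted for Hoeffding's. Your explicit handling of the one-sidedness of Lemma~\ref{lem:mpeb} (applying it to both $X_i$ and $1-X_i$ and union-bounding the two tails) is a detail the paper leaves implicit but which correctly accounts for the $\log(4|\mathcal{S}||\mathcal{A}|/\delta)$ term in the definition of $e_Q^B$.
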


The advantage of this new error function is the improved asymptotic bound, especially for low variance returns.
There are also two disadvantages. The first is that if only a few observations are available, the bound is worse than the bound achieved by the Hoeffding's bound. The second is that the value $G_{max}$ is used in the definition of $e_Q^B$. However, the maximal return is not necessarily known. There are various possible approaches to solve this issue. Either by estimating it from the observations $G_t$, lower bounding it by $\frac{1}{2}(\max_tG_t-\min_tG_t)$ or approximating it using $\frac{R_{max}}{1-\gamma}$. The first approach might be very accurate in practice when enough data is available, the second approach might be less powerful, however, Lemma \ref{lem:mpeb} is certain to hold for any lower bound on $G_{max}$ and $\frac{R_{max}}{1-\gamma}$ might be a good a priori known approximation for some MDPs and it would also work to substitute $G_{max}$ in this whole theory by $\frac{R_{max}}{1-\gamma}$, which makes also Theorem \ref{th:theorem-1} more interpretable but less powerful.

In Section \ref{sec:limitations-of-the-theory-in-practice} we empirically investigate how the error function relying on the Maurer and Pontil bound compare to the one relying on Hoeffding's bound.

\section{Experiments}\label{sec:experiments}
We test the adapted Soft-SPIBB algorithms against Basic RL (classical Dynamic Programming \cite{sutton_reinforcement_nodate} on the MLE MDP $\hat{M}$), Approx-Soft-SPIBB \cite{nadjahi_safe_2019}, its predecessors  $\Pi_b$- and $\Pi_{\leq b}$-SPIBB \cite{laroche_safe_2019}, DUIPI \cite{jabin_uncertainty_2010}, RaMDP \cite{petrik} and R-MIN, the pessimistic adaptation of R-MAX \cite{Brafman2002RMAXA}.
    
Since the adaptation of DUIPI to avoid unknown state-action pairs, as described in Section \ref{sec:duipi}, proved in these experiments to be an important and consistent improvement, we display only the results of this adaptation of DUIPI in Section \ref{sec:results}. We also checked if the error function relying on the bound by Maurer and Pontil \cite{mpeb} improved the performance of the Soft-SPIBB algorithms. However, it did not provide a significant improvement and, thus, we also only show the results for the algorithms applying the error function based on Hoeffding's inequality.

\subsection{Experimental Settings}

We use two different benchmarks for our comparison. The first one is the Random MDPs benchmark already used in Laroche et al. \cite{laroche_safe_2019} and Nadjahi et al. \cite{nadjahi_safe_2019}. As the second benchmark we use the Wet Chicken benchmark \cite{alippi_efficient_2009} which depicts a more realistic scenario.

We perform a grid-search to choose the optimal hyper-parameter for each algorithm for both benchmarks. Our choices can be found in the table below. The hyper-parameter selection is the main difference to the experiments we conduct in Section \ref{sec:limitations-of-the-theory-in-practice} as the hyper-parameters here are solely optimized for performance without considerations of the safety guarantees, while we choose the hyper-parameter in Section \ref{sec:limitations-of-the-theory-in-practice} such that the bounds are meaningful.
\begin{table}[h]
 \centering
\caption{Chosen hyper-parameters for both benchmarks~\cite{scholl_icaart22}.} \label{tab:hyper-parameter}
\begin{tabular}{|c|c|c|}
	\hline
	\textbf{Algorithms} & \makecell{\textbf{Random} \textbf{MDPs}} & \textbf{Wet Chicken} \\
	\hline
    Basic RL & - & - \\
	\hline
	RaMDP & $\kappa=0.05$ & $\kappa=2$ \\
	\hline
	R-MIN & $N_\wedge=3$ & $N_\wedge=3$ \\
	\hline
	DUIPI & $\xi=0.1$ & $\xi=0.5$ \\
	\hline
	$\Pi_b$-SPIBB & $N_\wedge=10$ & $N_\wedge=7$\\
	\hline
    $\Pi_{\leq b}$-SPIBB &$N_\wedge=10$ & $N_\wedge=7$\\
	\hline
	\makecell{Approx-Soft-SPIBB} & $\delta=1,\,\epsilon=2$ & $\delta=1,\,\epsilon=1$\\
	\hline
	\makecell{Adv-Approx-Soft-SPIBB (ours)} & $\delta=1,\,\epsilon=2$ & $\delta=1,\,\epsilon=1$\\
	\hline
	\makecell{Lower-Approx-Soft-SPIBB (ours)} & $\delta=1,\,\epsilon=1$ & $\delta=1,\,\epsilon=0.5$\\
	\hline
\end{tabular}

\end{table}

In both experiments we conduct $10,000$ iterations for each algorithm. As we are interested in Safe Policy Improvement, we follow Chow et al. \cite{Chow2015RiskSensitiveAR}, Laroche et al. \cite{laroche_safe_2019}, and Nadjahi et al. \cite{nadjahi_safe_2019} and consider besides the mean performance also the 1\%-CVaR (Critical Value at Risk) performance, which is the mean performance over the worst 1\% of the runs.

\subsubsection{Random MDP Benchmark}
First we consider the grid-world like Random MDPs benchmark introduced in Nadjahi et al. \cite{nadjahi_safe_2019} which generates a new MDP in each iteration. The generated MDPs consist of 50 states, including an initial state (denoted by 0) and a final state. In every non-terminal state there are four actions available and choosing one leads to four possible next states. All transitions yield zero reward except upon entering the terminal state, which gives a reward of 1. As the discount factor is chosen as $\gamma=0.95$, maximizing the return is equivalent to finding the shortest route to the terminal state. 

The baseline policy on each MDP is computed such that its performance is approximately $\rho_{\pi_b}=V_{M^*}^{\pi_b}(0)=\eta V_{M^*}^{\pi_*}(0)+(1-\eta)V_{M^*}^{\pi_u}(0)$, where $0\leq\eta\leq1$ is the baseline performance target ratio interpolating between the performance of the optimal policy $\pi_*$ and the uniform policy $\pi_u$. The generation of the baseline policy starts with a softmax on the optimal action-value function and continues with adding random noise to it, until the desired performance is achieved \cite{nadjahi_safe_2019}. To counter the effects from incorporating knowledge about the optimal policy, the MDP is altered after the generation of the baseline policy by transforming one regular state to a terminal one which also yields a reward of 1.

The performances are
normalized to make them more comparable between different runs by calculating 
$\bar{\rho}_\pi= \frac{\rho_\pi-\rho_{\pi_b}}{\rho_{\pi_*}-\rho_{\pi_b}}$.
Thus, $\bar{\rho}_\pi<0$ means a worse performance than the baseline policy,
$\bar{\rho}_\pi>0$ means an improvement w.r.t.\ the baseline policy and
$\bar{\rho}_\pi=1$ means the optimal performance was reached.

\subsubsection{Wet Chicken Benchmark}

    The second experiment uses the more realistic Wet Chicken benchmark \cite{alippi_efficient_2009} because of its heterogeneous stochasticity. Figure \ref{fig:wetchicken2} visualizes the setting of the Wet Chicken benchmark. The basic idea behind it is that a person floats in a small boat on a river. The river has a waterfall at one end and the goal of the person is to stay as close to the waterfall as possible without falling down. Thus, the closer the person gets to the waterfall the higher the reward gets, but upon falling down they start again at the starting place, which is as far away from the waterfall as possible. It is modeled as a non-episodic MDP.
	
	\begin{figure}[h!]
		\centering
		\includegraphics[width=4cm]{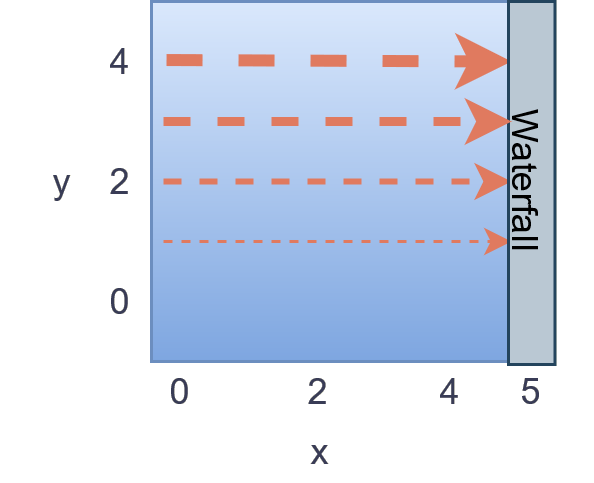}
		\vspace*{-3mm}
		\caption{The setting of the Wet Chicken benchmark used for reinforcement learning. The boat starts at $(x,y)=(0,0)$ and starts there again upon falling down the waterfall at $x=5$. The arrows show the direction and strength of the stream towards the waterfall. Additionally, turbulence is stronger for small $y$. The goal for the boat is to stay as close as possible to the waterfall without falling down~\cite{scholl_icaart22}.}
		\label{fig:wetchicken2}
	\end{figure}
	
	The whole river has a length and width of 5, so, there are 25 states. The
	starting point is $(x,y)=(0,0)$ and the waterfall is at $x=5$. The position of
	the person at time $t$ is denoted by the pair $(x_t,y_t)$. The river itself has
	a turbulence ($b_t=3.5 - v_t$) which is stronger for small $y$ 
	and a stream ($v_t=y_t\frac{3}{5}$) towards the waterfall which 
	is stronger for larger $y$. The effect of the turbulence is stochastic; so, let
	$\tau_t\sim U(-1,1)$ be the parameter describing
	the stochasticity of the turbulence at time $t$.

The person can choose from five actions listed below with $a_x$ and $a_y$ describing the
	influence of an action on $x_t$ and $y_t$, respectively:
	\begin{itemize}
		\item Drift: The person does nothing, $(a_x,a_y)=(0,0)$.
		\item Hold: The person paddles back with half their power
		$(a_x,a_y)=(-1,0)$.
		\item Paddle back: The person wholeheartedly paddles back,
		$(a_x,a_y)=(-2,0)$.
		\item Right: The person goes to the right parallel to the waterfall, $(a_x,a_y)=(0,1)$.
		\item Left: The person goes to the left parallel to the waterfall, $(a_x,a_y)=(0,-1)$.
	\end{itemize}
	The new position of the person assuming no river constraints is then calculated
	by 
	\begin{equation} \label{eq:wet-chicken-rounding}
		(\hat{x},\hat{y}) = ( {\rm round}(x_t + a_x + v_t + \tau_t  b_t),
		 {\rm round}(x_t + a_y)),
	\end{equation}
	where the $round$ function is the usual one, i.e., a number is getting rounded
	down if the first decimal is 4 or less and rounded up otherwise. Incorporating
	the boundaries of the river yields the new position as
	\begin{equation}
		x_{t+1}=\begin{dcases}
			\hat{x},& \text{if } 0\leq\hat{x}\leq4 \\
			0, & \text{otherwise}
		\end{dcases}
	\text{ and }
		y_{t+1} = \begin{dcases}
			0,& \text{if } \hat{x}>4 \\
			4,& \text{if } \hat{y}>4 \\
			0,& \text{if } \hat{y}>0 \\
			\hat{y}, & \text{otherwise}
		\end{dcases}.
	\end{equation}
	
	As the aim of this experiment is to have a realistic setting for Batch RL, we use a realistic behavior policy. Thus, we do not incorporate any knowledge about the transition probabilities or the optimal policy as it has been done for the Random MDPs benchmark. Instead we devise heuristically a policy, considering the overall structure of the MDP.
	
	Our behavior policy follows the idea that the most beneficial state might lie in the middle of the river at $(x,y)=(2,2)$. This idea stems from two trade-offs. The first trade-off is between low rewards for a small $x$ and a high risk of falling down for a big $x$ and the second trade-off is between a high turbulence and low velocity for a low $y$ and the opposite for big $y$. To be able to ensure the boat stays at the same place turbulence and velocity should both be limited. 
	
	This idea is enforced through the following procedure. If the boat is not in the state $(2,2)$, the person tries to get there and if they are already there, they use the action \textit{paddle back}. Denote this policy with $\pi_b'$. We cannot use this exact policy, as it is deterministic, i.e., in every state there is only one action which is chosen with probability 1.
	This means that for each state there is at most 1 action for which data is
	available when observing this policy. This is countered
	by making $\pi_b'$ $\epsilon$-greedy, i.e., define the behavior policy $\pi_b$
	as the mixture
	\begin{equation}
		\pi_b = (1-\epsilon)\pi_b' + \epsilon \pi_u
	\end{equation}
	where $\pi_u$ is the uniform policy which chooses every action in every state
	with the same probability. $\epsilon$ was chosen to be 0.1 in the following experiments. By calculating the transition matrix of the Wet Chicken benchmark, it is possible to determine the performances of our polices exactly and compute the optimal policy. This helps to get a better grasp on the policies' performances: The baseline policy with $\epsilon=0.1$ has a performance of 29.8, the uniform policy $\pi_u$ of 20.7 and the optimal policy of 43.1.

\subsection{Results} \label{sec:results}
\begin{figure}[h]
\centering
	\begin{subfigure}{1\textwidth}
		\centering
		\includegraphics[width=0.9\linewidth]{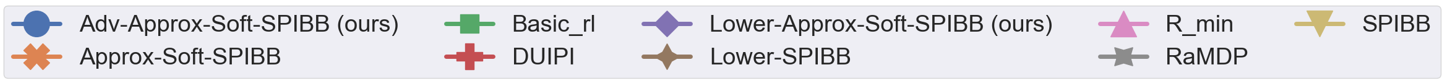}
		\vspace{0.2cm}
	\end{subfigure}
\centering
\begin{subfigure}{.5\textwidth}
  \centering
  \includegraphics[width=1\linewidth]{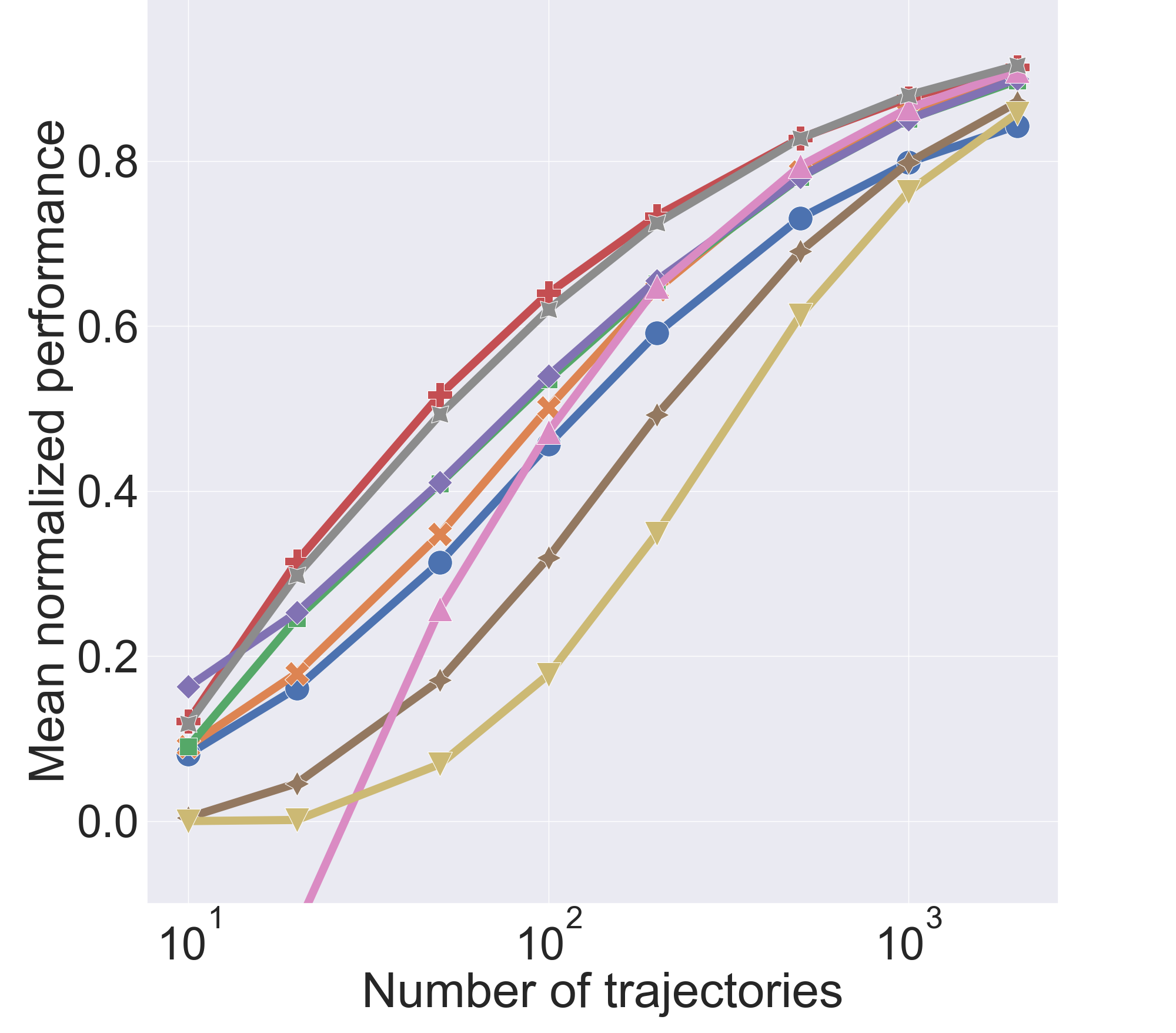}
		\caption{Mean}
		\label{fig:random-mdps-mean}
\end{subfigure}%
\begin{subfigure}{.5\textwidth}
  \centering
  \includegraphics[width=1\linewidth]{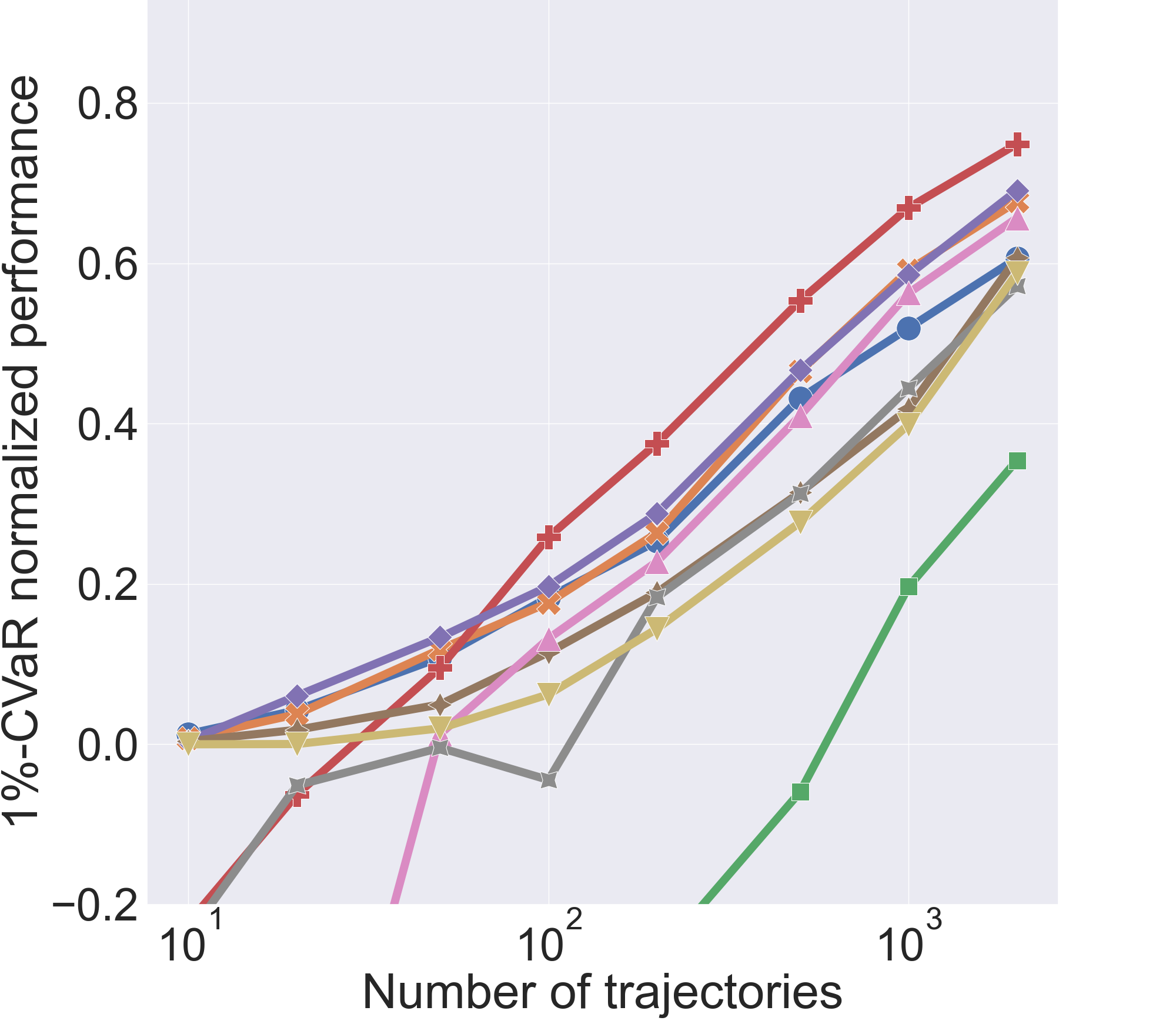}
		\caption{1\%-CVaR}
		\label{fig:random-mdps-cvar}
\end{subfigure}
\caption{Mean (a) and 1\%-CVaR (b) normalized performance over 10,000 trials on the Random MDPs benchmark for $\rho_{\pi_b}=0.9$. In the context of SPI the focus lies on the 1\%-CVaR. The mean performance is dominated by the algorithms applying a penalty on the action-value function, while the restricting algorithms are winning for few data points in the risk-sensitive 1\%-CVaR measure and only lose to DUIPI in the long run. Among the SPIBB class, Lower-Approx-Soft-SPIBB shows the best performance in both runs~\cite{scholl_icaart22}.}
	\label{fig:random-mdps}
\end{figure}
	\begin{figure}[h]
    \centering
    	\begin{subfigure}{1\textwidth}
    		\centering
    		\includegraphics[width=\textwidth]{Images/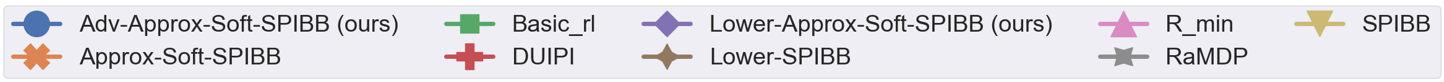}
    		\vspace{0.2cm}
    	\end{subfigure}
    \centering
    \begin{subfigure}{.5\textwidth}
      \centering
      \includegraphics[width=1\linewidth]{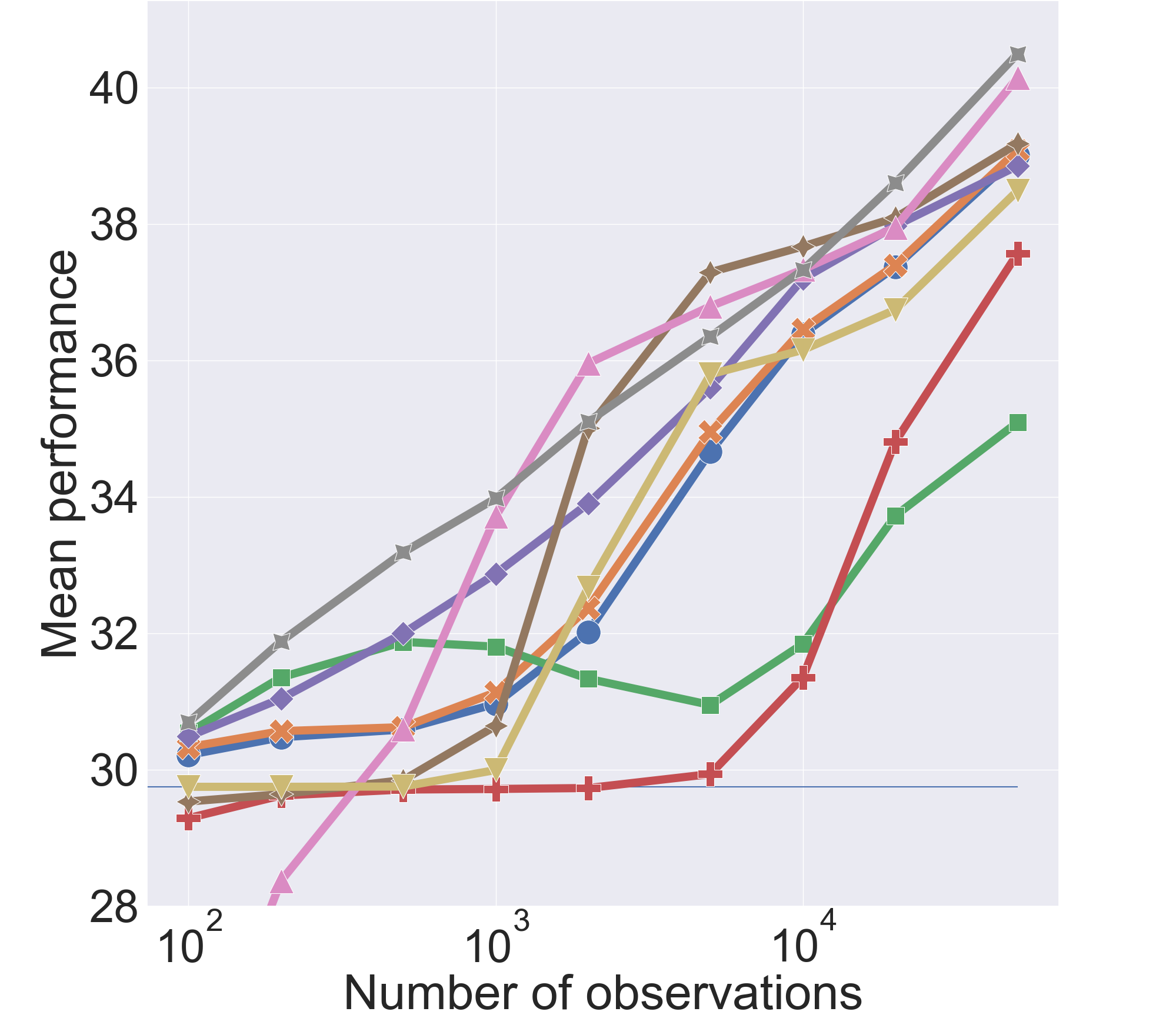}
    		\caption{Mean}
    		\label{fig:wet-chicken-mean}
    \end{subfigure}%
    \begin{subfigure}{.5\textwidth}
      \centering
      \includegraphics[width=1\linewidth]{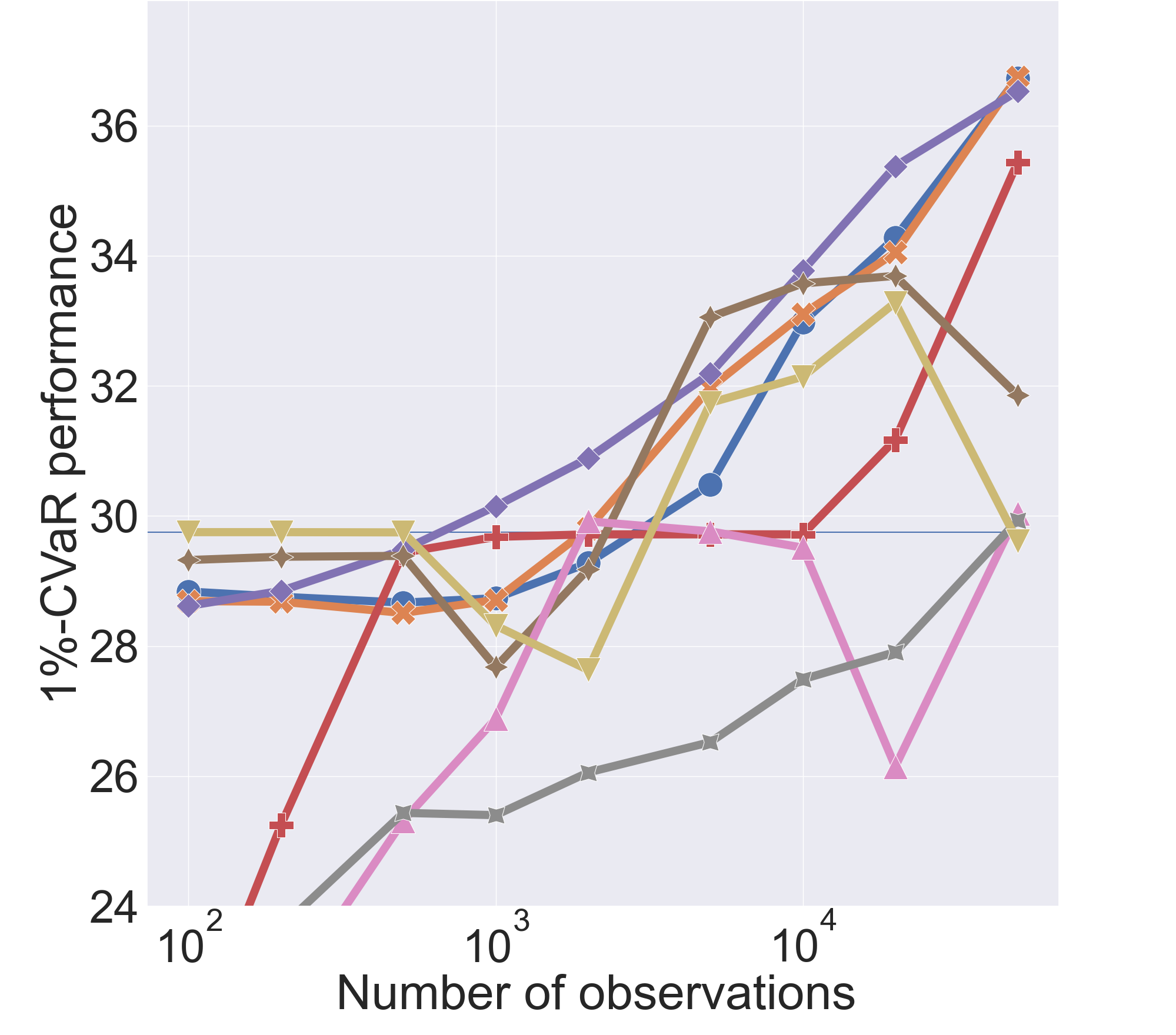}
    		\caption{1\%-CVaR}
    		\label{fig:wet-chicken-cvar}
    \end{subfigure}
    	\caption{Mean (a) and 1\%-CVaR (b) performance over 10,000 trials on the Wet Chicken benchmark for $\epsilon=0.1$ for the baseline policy. The mean performance is dominated by RaMDP, while the restricting algorithms are winning in the risk-sensitive 1\%-CVaR measure. Among the SPIBB class, Lower-Approx-Soft-SPIBB shows the best mean and 1\%-CVaR performance~\cite{scholl_icaart22}.}
    	\label{fig:wet-chicken}
    \end{figure}

The results with optimized hyper-parameters as shown in Table \ref{tab:hyper-parameter} can be seen in Figure \ref{fig:random-mdps} and \ref{fig:wet-chicken} for the Random MDPs and Wet Chicken benchmark, respectively. The performances for both benchmarks are very similar, except for DUIPI which apparently has strong problems to estimate the variance in the complex environment of the Wet Chicken benchmark. In general, the algorithms penalizing the action-value function dominate the mean performance, especially when large amounts of data are available, but they lose against the class of restricting algorithms when it comes to the 1\%-CVaR performance, which is the more interesting metric when talking about safety. The strongest performing algorithm among the class of restricting algorithm with respect to both metrics\textemdash{}thus, also the safest among both classes\textemdash{}and in both benchmarks is Lower-Approx-Soft-SPIBB. It is followed by Approx-Soft-SPIBB which is slightly better than its theoretical adaptation Adv-Approx-Soft-SPIBB. The worst algorithms from this class are the original SPIBB algorithms: $\Pi_b$-SPIBB (SPIBB) and $\Pi_{\leq b}$-SPIBB (Lower-SPIBB).

Interestingly, Basic RL is outperformed by various algorithms in the mean performance, which might be surprising as the others are
intended for safe RL instead of an optimization of their mean performance. The
reason for this might be that considering the uncertainty of the action-value
function is even beneficial for the mean performance. Also, considering its 1\%-CVaR performance\textemdash{}not even visible in Figure \ref{fig:wet-chicken-cvar} due to its inferior performance\textemdash{}the safety advantage of the SPI algorithms becomes very clear.

These two experiments demonstrate that restricting the set of policies instead of adjusting the action-value function can be very beneficial for the safety aspect of RL, especially in complex environments and for a low number of observations. On the contrary, from a pure mean performance point of view it is favorable to rather adjust the action-value function. 

\section{Critical Discussion of the Safety Bounds}\label{sec:limitations-of-the-theory-in-practice}
    In this section we want to check the strength of the safety bounds in the experiments. The only theoretically safe algorithms among those in the last section are Adv-Approx-Soft-SPIBB, $\Pi_b$-SPIBB and DUIPI. However, DUIPI only uses an approximation of the variance and additionally assumes that the action-value function is normally distributed. This has to be kept in mind when comparing
	the different bounds. To test these bounds we repeat the experiments on the Wet Chicken benchmark and this time choose the hyper-parameters such that the bound is a useful one.
	
	\paragraph{Adv-Approx-Soft-SPIBB}
    For Adv-Approx-Soft-SPIBB the safety guarantee is Theorem
	\ref{th:theorem-1} and, thus, we set $\delta>0$ close to 0 and also
	$\epsilon>0$ small enough to make Equation \ref{eq:theorem-1} meaningful. All the parameters are known a priori so one can
	choose $\epsilon$ and directly know the value of the bound. As we use $\gamma=0.95$, all the returns are between 0 and 80 ($80 =
	\frac{4}{1-\gamma}$) and, therefore, $G_{max}=40$, as there is an MDP equivalent
	to the usual Wet Chicken benchmark but with centered returns and $G_{max}$ only
	needs to be an upper bound on these centered ones. We choose
	$\epsilon$ to be at most $0.01$, as this yields a lower bound of $-8$ below the baseline policy's performance.
	
	As mentioned before, $G_{max}$ is known exactly in this setting, which enables the use of the error function relying on the Maurer and Pontil bound for Adv-Approx-Soft-SPIBB. In Section \ref{sec:advantageous-soft-spibb} we propose this new error function specifically for its improved asymptotic tightness and consequently we will test it out in this experiment as well.
	
	\paragraph{$\Pi_b$-SPIBB} In Section \ref{sec:spibb} we mentioned that $\Pi_b$-SPIBB produces policies which
	are $\xi$-approximate ($1-\delta$)-safe policy improvements w.r.t. the behavior
	policy $\pi_b$, where 
	\begin{equation} \label{eq:safety-of-spibb}
		\xi=\frac{4V_{max}}{1-\gamma}
		\sqrt{\frac{2}{N_\wedge}\log\frac{2|\mathcal{S}||\mathcal{A}|2^{|\mathcal{S}|}}{\delta}}.
	\end{equation}
	So, one has to choose the hyper-parameter $N_\wedge$, which is the number of
	visits a state-action pair needs for the behavior policy to be changed at this
	state-action pair, such that this lower bound becomes meaningful. In the Wet
	Chicken benchmark, $|\mathcal{S}|=25$, $|\mathcal{A}|=5$ and $V_{max}\approx 20$
	(referring to an MDP with centered performances). We use
	$\gamma=0.95$ and want $\delta\leq0.05$. To choose $N_\wedge$, Equation	\ref{eq:safety-of-spibb} is transformed to 
	\begin{equation} \label{eq:safety-of-spibb-transformed}
		N_\wedge =
		\frac{32V_{max}\log\left(\frac{2\mathcal{S}||\mathcal{A}|2^{|\mathcal{S}|}}{\delta}\right)}{\xi^2(1-\gamma)^2}.
	\end{equation}
	To gain again a lower bound of at least $-8$, one has to set $\xi=8$ and
	inserting all the values into Equation \ref{eq:safety-of-spibb-transformed}
	yields $N_\wedge=1,832,114$ for $\delta=0.05$. For a smaller $\delta$ this is even bigger. It is obvious that $\Pi_b$-SPIBB does not work for a $N_\wedge$
	that high, as the output policy would always be the behavior policy unless there
	are far more than $10,000,000$ observations available, which seems unreasonable for such a small MDP. For this reason, we exclude $\Pi_b$-SPIBB
	from this experiment. 
	
	\paragraph{DUIPI} Lastly, DUIPI's safety stems from the calculation of the variance of the action-value
	function by assuming a distribution on the action-value function and then applying the bound through Equation \ref{eq:duipi-safety-bound}. For example, for a lower
	bound which applies to $99\%$ one has to choose $\xi>2.33$. Contrary to Adv-Approx-Soft-SPIBB it is unknown in advance what the lower bound is, as this depends
	on the variance which depends on the policy itself. 
	
		\begin{table}[h]
		    \caption{Performance and bounds of the provably safe algorithms.} \label{tab:safety-benchmark}
     \centering
    \begin{tabular}{|c|c|c|c|c|c|c|}
    	\hline
    	\textbf{Algorithms} & 
    	\multicolumn{2}{c|}{\makecell{\textbf{Adv-Approx-} \\ \textbf{Soft-SPIBB} \\ \textbf{(Hoeffding)}}} & \multicolumn{2}{c|}{\makecell{\textbf{Adv-Approx-} \\ \textbf{Soft-SPIBB} \\ \textbf{(Maurer and Pontil)}}} & 
    	\multicolumn{2}{c|}{\textbf{DUIPI}} \\
    	\hline
        \makecell{\textbf{Length} \\ \textbf{trajectory}}  & 
        \makecell{\textbf{1\%-CVaR} \\ \textbf{performance} } & \makecell{\textbf{Bound} } & 
        \makecell{\textbf{1\%-CVaR} \\ \textbf{performance} } & \makecell{\textbf{Bound} } & 
        \makecell{\textbf{1\%-CVaR} \\ \textbf{performance} } & \makecell{\textbf{Bound} } \\
    	\hline
    	5,000 & 29.6 & 21.5 & 29.6 & 21.5 & 29.7 & 25.2 \\
    	\hline
    	10,000 & 29.7 & 21.5 & 29.7 & 21.5 & 29.7 & 26.5 \\
    	\hline
    	50,000 & 30.1 & 21.5 & 30.3 & 21.5 & 29.7 & 28.2 \\
    	\hline
    	100,000 & 30.4 & 21.5 & 30.9 & 21.5 & 30.1 & 29.5 \\
    	\hline
        500,000 & 31.4 & 21.5 & 33.7 & 21.5 & 37.7 & 37.2 \\
    	\hline
    	1,000,000 & 32.1 & 21.5 & 35.4 & 21.5 & 38.0 & 37.6 \\
    	\hline
    \end{tabular}
    \end{table}
	
	\bigskip
	
	Table \ref{tab:safety-benchmark} shows the algorithms with their respective 1\%-CVaR performances over 10,000 runs and the safety bounds. The hyper parameters were chosen for each of the algorithms such that the bound holds at least for 99\% of the runs, however, in these experiments they actually held all of the time, indicating that the bounds might be too loose. The baseline policy is as described in the previous section, however, it is 0.2-greedy this time and has a performance of 29.5.
	
	These experiments show that for all algorithms a huge amount of data is
	necessary to get a sharp bound and produce a good policy at the same time. Adv-Approx-Soft-SPIBB applying the error function relying on Hoeffding's inequality performs arguably the worst. While still achieving the same security, the performance of the computed policies is significantly higher by using the error function relying on the Maurer and Pontil bound. DUIPI simply reproduces the greedy version of the baseline policy until a trajectory length of 50,000. As soon as 500,000 steps are in the data set, however, DUIPI strongly outperforms its competitors and also yields meaningful lower bounds on its performance. However, one has to keep in mind that DUIPI uses a first order Taylor approximation for the uncertainty propagation, approximates the full covariance matrix only by the diagonal and assumes a prior distribution on the action-value function.
    
    Adv-Approx-Soft-SPIBB is very sensitive to changes in $\epsilon$. E.g., for $\epsilon=0.005$, which yields a safety bound of 25.5, Adv-Approx-Soft-SPIBB using Hoeffding's inequality achieves only a 1\%-CVaR performance of 30.9 for a trajectory length of 1,000,000 and of 33.2 if it uses the inequality of Maurer and Pontil. Contrary to that, changes in $\delta$ have a weaker effect on its performance, as setting $\delta=0.001$ results in a 1\%-CVaR performance of 31.9 and 34.9 for Adv-Approx-Soft-SPIBB relying on the inequalities by Hoeffding and by Maurer and Pontil, respectively. DUIPI only has one parameter to control the likelihood of the bound being correct and choosing it such that the bound holds with 99.9\% (corresponding to a $\delta=0.001$), leaves the 1\%-CVaR performance for a trajectory with 1,000,000 steps unchanged but deteriorates the 1\%-CVaR performance for smaller data sets, i.e., for less than 500,000 observations the new policy will simply be the greedy baseline policy and for 500,000 the 1\%-CVaR performance becomes 35.4.
\FloatBarrier

\section{Conclusion}
\label{sec:conclusion}
In this paper, we reviewed and classified multiple SPI algorithms. We showed that the algorithms in Nadjahi et al. \cite{nadjahi_safe_2019} are not provably safe and have proposed a new version that is provably safe. Adapting their ideas, we also derived a heuristic algorithm which shows, among the entire SPIBB class on two different benchmarks, both the best mean performance and the best $1\%$-CVaR performance, which is important for safety-critical applications.
Furthermore, it proves to be competitive in the mean performance against other state of the art uncertainty incorporating algorithms and especially to outperform them in the $1\%$-CVaR performance. Additionally, it has been shown that the theoretically supported Adv-Approx-Soft-SPIBB performs almost as well as its predecessor Approx-Soft-SPIBB, only falling slightly behind in the mean performance. 

The experiments also demonstrate different properties of the two classes of SPI algorithms in Figure~\ref{fig:taxonomy}: algorithms penalizing the action-value functions tend to perform better in the mean, but lack in the 1\%-CVaR, especially if the available data is scarce.

In our experiments in Section \ref{sec:limitations-of-the-theory-in-practice}, we demonstrate that the safety guarantees only become helpful when huge amounts of data are available, which restricts their use cases for now. However, we also show that there are still possible improvements, for example, relying on tighter error functions. To explore this idea further or to look for improvements of the bound in Theorem \ref{th:theorem-1}, as already started in Scholl \cite{scholl}, would be important next steps to make SPI more applicable in practice.

Perhaps the most relevant direction of future work is to apply this framework to continuous 
MDPs, which has so far been explored by Nadjahi et al. \cite{nadjahi_safe_2019} and Scholl \cite{scholl} without theoretical safety guarantees. Apart from the theory, we hope that our observations of the two classes of SPI algorithms can contribute to the 
choice of algorithms for the continuous case.

\subsubsection*{Acknowledgements}
FD was partly funded by Deutsche Forschungsgemeinschaft (DFG, German Research Foundation),  project 468830823.  PS, CO and SU were partly funded by German Federal Ministry of Education and Research, project 01IS18049A (ALICE III).

%
%
%
\bibliographystyle{splncs04}
%
\bibliography{main}

\begin{thebibliography}{10}
\providecommand{\url}[1]{\texttt{#1}}
\providecommand{\urlprefix}{URL }
\providecommand{\doi}[1]{https://doi.org/#1}

\bibitem{Brafman2002RMAXA}
Brafman, R.I., Tennenholtz, M.: {R-MAX} - a general polynomial time algorithm
  for near-optimal reinforcement learning. Journal of Machine Learning Research
   \textbf{3} (Mar 2003)

\bibitem{Chow2015RiskSensitiveAR}
Chow, Y., Tamar, A., Mannor, S., Pavone, M.: Risk-sensitive and robust
  decision-making: a {CVaR} optimization approach. In: Proceedings of the 28th
  International Conference on Neural Information Processing Systems (2015)

\bibitem{linear-program}
Dantzig, G.B.: Linear Programming and Extensions. RAND Corporation, Santa
  Monica, CA (1963)

\bibitem{pmlr-v97-fujimoto19a}
Fujimoto, S., Meger, D., Precup, D.: Off-policy deep reinforcement learning
  without exploration. In: Proc.\ of the 36th International Conference on
  Machine Learning (2019)

\bibitem{garcia_comprehensive_nodate}
Garc\'ia, J., Fernandez, F.: A {Comprehensive} {Survey} on {Safe}
  {Reinforcement} {Learning}. Journal of Machine Learning Research.
  \textbf{16} (2015)

\bibitem{5967358}
Hans, A., Duell, S., Udluft, S.: Agent self-assessment: Determining policy
  quality without execution. In: IEEE Symposium on Adaptive Dynamic Programming
  and Reinforcement Learning (2011)

\bibitem{alippi_efficient_2009}
Hans, A., Udluft, S.: Efficient {Uncertainty} {Propagation} for {Reinforcement}
  {Learning} with {Limited} {Data}. In: Artificial {Neural} {Networks} –
  {ICANN}, vol.~5768 (2009)

\bibitem{Hoeffding}
Hoeffding, W.: Probability inequalities for sums of bounded random variables.
  Journal of the American Statistical Association  \textbf{58}(301),  13--30
  (1963)

\bibitem{lange_batch_2012}
Lange, S., Gabel, T., Riedmiller, M.: Batch {Reinforcement} {Learning}. In:
  Reinforcement {Learning}: {State}-of-the-{Art}. Adaptation, {Learning}, and
  {Optimization}, Springer, Berlin, Heidelberg (2012)

\bibitem{laroche_safe_2019}
Laroche, R., Trichelair, P., Tachet~des Combes, R.: Safe policy improvement
  with baseline bootstrapping. In: Proc.\ of the 36th International Conference
  on Machine Learning (2019)

\bibitem{leurent:tel-03035705}
Leurent, E.: {Safe and Efficient Reinforcement Learning for Behavioural
  Planning in Autonomous Driving}. Theses, {Universit{\'e} de Lille} (Oct 2020)

\bibitem{DBLP:journals/corr/abs-2005-01643}
Levine, S., Kumar, A., Tucker, G., Fu, J.: Offline reinforcement learning:
  Tutorial, review, and perspectives on open problems. CoRR
  \textbf{abs/2005.01643} (2020)

\bibitem{mpeb}
Maurer, A., Pontil, M.: Empirical {Bernstein} bounds and sample-variance
  penalization. In: COLT (2009)

\bibitem{nadjahi_safe_2019}
Nadjahi, K., Laroche, R., Tachet~des Combes, R.: Safe policy improvement with
  soft baseline bootstrapping. In: Proc.\ of the 2019 European Conference on
  Machine Learning and Principles and Practice of Knowledge Discovery in
  Databases (2019)

\bibitem{robust_mdps}
Nilim, A., El~Ghaoui, L.: Robustness in {Markov} decision problems with
  uncertain transition matrices. In: Proc.\ of the 16th International
  Conference on Neural Information Processing Systems (2003)

\bibitem{petrik}
Petrik, M., Ghavamzadeh, M., Chow, Y.: Safe policy improvement by minimizing
  robust baseline regret. In: Proceedings of the 30th International Conference
  on Neural Information Processing Systems. NIPS'16, Curran Associates Inc.,
  Red Hook, NY, USA (2016)

\bibitem{schaefer_neural_2007}
Schaefer, A.M., Schneegass, D., Sterzing, V., Udluft, S.: A neural
  reinforcement learning approach to gas turbine control. In: {International}
  {Joint} {Conference} on {Neural} {Networks} (Aug 2007)

\bibitem{jabin_uncertainty_2010}
Schneegass, D., Hans, A., Udluft, S.: Uncertainty in {Reinforcement} {Learning}
  - {Awareness}, {Quantisation}, and {Control}. In: Robot {Learning}. Sciyo
  (Aug 2010)

\bibitem{scholl}
Scholl, P.: Evaluation of Safe Policy Improvement with Soft Baseline
  Bootstrapping. Master's thesis, Technical University of Munich (Apr 2021)

\bibitem{scholl_icaart22}
Scholl, P., Dietrich, F., Otte, C., Udluft, S.: Safe policy improvement
  approaches on discrete {Markov} decision processes. In: Proceedings of the
  14th International Conference on Agents and Artificial Intelligence - Volume
  2: ICAART. pp. 142--151. INSTICC, SciTePress (2022).
  \doi{10.5220/0010786600003116}

\bibitem{Simo2020SafePI}
Sim{\~a}o, T.D., Laroche, R., {Tachet des Combes}, R.: {Safe Policy Improvement
  with an Estimated Baseline Policy}. In: Proc.\ of the 19th International
  Conference on Autonomous Agents and MultiAgent Systems (2020)

\bibitem{sutton_reinforcement_nodate}
Sutton, R.S., Barto, A.G.: Reinforcement learning: An introduction. MIT press
  (2018)

\bibitem{thomas_safe_nodate}
Thomas, P.S.: Safe {Reinforcement} {Learning}. Doctoral dissertations.,
  University of Massachusetts (2015)

\bibitem{wang2021what}
Wang, R., Foster, D., Kakade, S.M.: What are the statistical limits of offline
  {RL} with linear function approximation? In: International Conference on
  Learning Representations (2021)

\end{thebibliography}





\end{document}